\newtheorem{theorem}{Theorem}
\newtheorem{lemma}{Lemma}
\newtheorem{proposition}{Proposition}
\newtheorem{example}{Example}
\newcommand{\RR}{\mathbb{R}}
\newcommand{\PP}{\mathbb{P}}
\def\qmatrix#1{\left[\begin{matrix}#1\end{matrix}\right]} 
\newcommand{\G}{\text{Gr}(1,3)}
\def\mypar#1{\noindent{\bf #1}}
\def\vect#1{\mbox{\boldmath $#1$}}
\renewcommand\AB@affilsepx{ \hspace{.4cm} \protect\Affilfont}
\ifcvprfinal\pagestyle{empty}\fi
\begin{document}

%%%%%%%%% TITLE
\title{General models for rational cameras and the case of two-slit projections\vspace{-.2cm}}

\begin{comment}
\author{
{\normalsize
Matthew Trager\thanks{Willow project team. DI/ENS,
ENS/CNRS/Inria UMR 8548.}}\\
{\footnotesize \'Ecole Normale Sup\'erieure}\\
{\footnotesize PSL Research University}\\{\footnotesize Inria}\\
% For a paper whose authors are all at the same institution,
% omit the following lines up until the closing ``}''.
% Additional authors and addresses can be added with ``\and'',
% just like the second author.
% To save space, use either the email address or home page, not both
\and
{\normalsize Bernd Sturmfels}\\
{\footnotesize UC Berkeley}\\
\and
{\normalsize John Canny}\\
{\footnotesize UC Berkeley}\\
\and
{\normalsize Martial Hebert}\\
{\footnotesize Carnegie Mellon University}\\
\and
{\normalsize Jean Ponce\footnotemark[1]}\\
{\footnotesize \'Ecole Normale Sup\'erieure}\\
{\footnotesize PSL Research University}\\
{\footnotesize Inria}\\
}
\end{comment}

\author[1,2]{\normalsize Matthew Trager}
\author[3]{\normalsize Bernd Sturmfels}
\author[3]{\normalsize John Canny}
\author[4]{\normalsize Martial Hebert}
\author[1,2]{\normalsize Jean Ponce\vspace{-0cm}}
\affil[1]{\footnotesize \'Ecole Normale Sup\'erieure, CNRS, PSL Research University}
\affil[2]{\footnotesize Inria}
\affil[3]{\footnotesize UC Berkeley}
\affil[4]{\footnotesize Carnegie Mellon University \vspace{-0cm}}

\maketitle
\thispagestyle{empty}

\vspace{-.4cm}

%%%%%%%%% ABSTRACT
\begin{abstract} The rational camera model recently introduced in~\cite{ponce2016congruences} provides a general methodology for studying abstract nonlinear imaging systems and their multi-view geometry. This paper builds on this framework to study ``physical realizations'' of rational cameras. More precisely, we give an explicit account of the mapping between between physical visual rays and image points (missing in the original description), which allows us to give simple analytical expressions for direct and inverse projections. We also consider ``primitive'' camera models, that are orbits under the action of various projective transformations, and lead to a general notion of intrinsic parameters. The methodology is general, but it is illustrated concretely by an in-depth study of two-slit cameras, that we model using pairs of linear projections. This simple analytical form allows us to describe models for the corresponding primitive cameras, to introduce intrinsic parameters with a clear geometric meaning, and to define an epipolar tensor characterizing two-view correspondences. In turn, this leads to new algorithms for structure from motion and self-calibration. \end{abstract}

\vspace{-.2cm}

%%%%%%%%% BODY TEXT
\section{Introduction}

The past 20 years have witnessed steady progress in the construction of effective geometric and analytical models of more and more general imaging systems, going far beyond classical pinhole perspective (e.g.,~\cite{Batog11,BGP10,GroNay05,gupta1997linear,Pajdla02b,Pajdla02,ponce2009camera,ponce2016congruences,RadBis98,Sturm11,YuMcM04,ZFPW03}).  In particular, it is now recognized that the {\em essential} part of any imaging system is the mapping between scene points and the corresponding light rays.  The mapping between the rays and the points where they intersect the retina plays an {\em auxiliary} role in the image formation process. For pinhole cameras, for example, all retinal planes are geometrically equivalent since the corresponding image patterns are related to each other by projective transforms.  Much of the recent work on general camera models thus focuses primarily on the link between scene points and the {\em line congruences}~\cite{Kummer66,DePoi04} formed by the corresponding rays, in a purely projective setting~\cite{Batog11,BGP10}. The {\em rational camera} model of Ponce, Sturmfels and Trager~\cite{ponce2016congruences} is a recent instance of this approach, and provides a unified algebraic framework for studying a very large class of imaging systems and the corresponding multi-view geometry. It is, however, {\em abstract}, in the sense that the mapping between visual rays and image points is left unspecified. This paper provides a {\em concrete} embedding of this model, by making the mapping from visual rays to a retinal plane explicit, and thus identifying physical instances of rational cameras. The imaging devices we consider are in fact the composition of three maps: the first two are purely geometric, and map scene
points onto visual rays, then rays onto the points where they intersect a retinal plane. The last map is analytical: given a coordinate system on the retina, it maps image points onto their corresponding coordinates (see Figure~\ref{fig:two_slit1}).
\begin{figure}[t]
\vspace{-.1cm}
\begin{center}
   \includegraphics[width=0.5\linewidth]{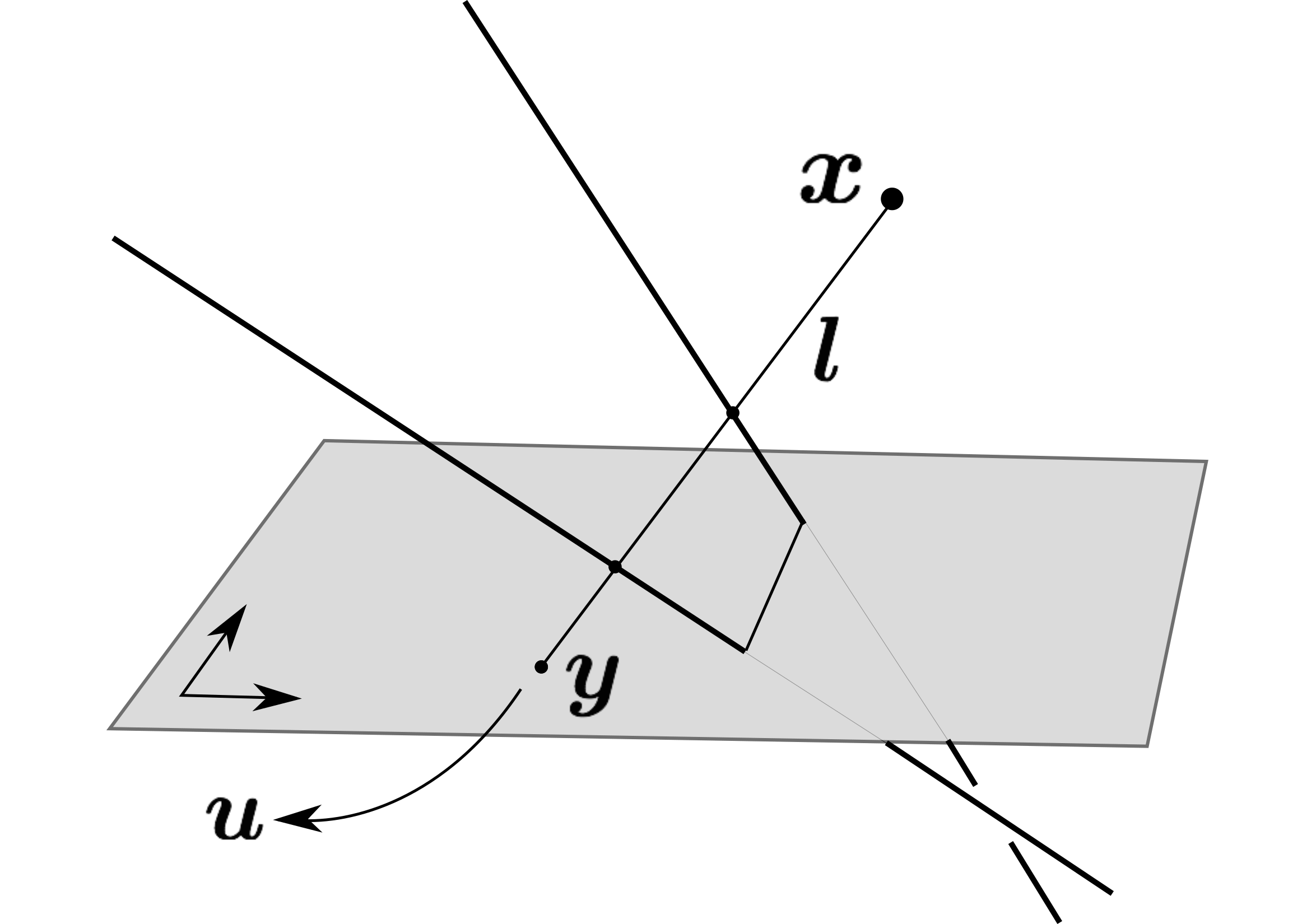}
   \vspace{-.1cm}
\end{center}
   \caption{A general camera associates a scene point $\vect x$ with a visual ray $\vect l$, then maps the ray $\vect l$ to its intersection $\vect y$ with some retinal plane $\vect \pi$, and finally uses a projective coordinate system on $\vect \pi$ to express $\vect y$ as a point $\vect u$ in $\PP^2$.}
      \vspace{-.3cm}
\label{fig:two_slit1}
\end{figure}
In particular, by introducing the notion of ``primitive'' camera model as the orbit of a camera under the action of projective, affine, similarity, and euclidean transformations, we can generalize many classical attributes of pinhole projections, such as intrinsic coordinates and calibration matrices.  Our methodology applies to arbitrary (rational) cameras, but it is illustrated concretely by an in-depth study of two-slit cameras~\cite{BGP10,gupta1997linear,Pajdla02b,ZFPW03}, which we describe using a pair of linear $2\times 4$ projection matrices. This simple form allows us to describe models for the corresponding primitive cameras, to introduce intrinsic parameters with a clear geometric meaning, and to define an {\em epipolar tensor} characterizing two-view correspondences. In turn, this leads to new algorithms for structure from motion and self-calibration.

\subsection{Background}
Pinhole (or {\em central}) perspective has served since the XV$^{\text{th}}$ Century as an effective model of physical imaging systems, from the camera obscura and the human eye to the daguerreotype and today's digital photographic or movie cameras.  Under this model, a scene point $\vect{x}$ is first mapped onto the unique line $\vect{l}$ joining $\vect{x}$ to $\vect{c}$, which is itself then mapped onto its intersection $\vect{y}$  with some retinal plane $\vect{\pi}$.\footnote {This is of course an {\em  abstraction} of a physical camera, where a ``central ray'' is  picked inside the finite light beams of physical  optical systems. As noted in~\cite{BGP10}, this does not take anything away from the idealized model adopted here. For example, a camera equipped with a lens is effectively modeled, ignoring distortions, by a pinhole camera.}  All retinas are projectively equivalent, and thus the {\em essential} part of the imaging process is the map $\lambda_{L_c}: \PP^3\setminus\vect{c}\rightarrow L_c$ associating a point with the corresponding visual ray in the {\em bundle} $L_c$ of all lines through the pinhole~$\vect c$~\cite{ponce2009camera}. Similarly, many non-central cameras can be imagined (and actually constructed~\cite{SeKi02,Sturm11,YeYu14}) by replacing the line bundle $L_c$ with a more general family of lines. For example, Pajdla~\cite{Pajdla02} and Batog {\em et al.}~\cite{BGP10} have considered cameras that are associated with {\em linear congruences}, \ie, two-dimensional families of lines that obey linear constraints. This model applies two-slit~\cite{Pajdla02b,weinshall2002new,ZFPW03}, pushbroom~\cite{gupta1997linear}, and pencil~\cite{YuMcM04} cameras. For these devices, the projection can be described by a projective map $A$ so that each point $\vect x$ is mapped to the line joining $\vect{x}$ and $ A\vect{x}$. Recently, a generalization of this model to non-linear congruences~\cite{Kummer66,DePoi04} has been introduced by Ponce, Sturmfels and Trager~\cite{ponce2016congruences}. They study algebraic properties of the essential map that associates scene points to the corresponding viewing rays, and provide a general framework for studying multi-view geometry in this setting. On the other hand, they do not focus on the {\em auxiliary} part of the imaging process, that associates viewing rays with image coordinates, leaving this map left unspecified as an arbitrary birational map from a congruence to $\PP^2$. We provide in Section~\ref{sec:approach} the missing link with a concrete retinal plane and (pixel) coordinate systems, which is key for defining intrinsic parameters that have physical meaning, and for developing actual algorithms for multi-view reconstruction.

\subsection{Contributions}
From a theoretical standpoint: 1) we present a concrete embedding of the abstract rational cameras introduced in~\cite{ponce2016congruences}, deriving general direct and inverse projection formulas for these cameras, and 2) we introduce the notion of {\em primitive camera models}, that are orbits of rational cameras under the action of the projective, affine, and euclidean and similarity groups, and lead to the generalization familiar concepts such as intrinsic camera parameters.  From a more concrete point of view, we use this model for an in-depth  study of two-slit cameras~\cite{BGP10,Pajdla02b,ZFPW03}. Specifically, 1) we introduce a pair of {\em calibration matrices}, which generalize to our setting the familiar upper-triangular matrices associated with pinhole cameras, and can be identified with the orbits of two-slit cameras under similarity transforms; 2) we define the {\em epipolar tensor}, that encodes the projective geometry of a pair of two-slit cameras, and generalizes the traditional fundamental matrix; and 3) use these results to describe algorithms for structure from motion and self-calibration.

To improve readability, most proofs and technical material are deferred to the supplementary material.

\subsection{Notation and elementary line geometry}
\mypar{Notation.}
We use bold letters for matrices and vectors ($\vect{A}, \vect{x}$, $\vect{u}$, \etc) and regular fonts for coordinates ($a_i, x_i,u_i$, \etc).  Homogeneous coordinates are lowercase and indexed from one, and both points and planes are column vectors, \eg, $\vect{x}=(x_1, x_2 ,x_3,x_4)^T$, $\vect{u}=(u_1,u_2,u_3)^T$. For projective objects, equality is always up to some scale factor. 
%The scale is fixed to one for affine objects, and 
We identify $\RR^3$ with points $(x_1,x_2,x_3,1)^T$ in $\PP^3$, and adopt the standard euclidean metric.
We also use the natural hierarchy among projective transformations by the nested subgroups (dubbed from now on
{\em natural projective subgroups}) formed by euclidean (orientation-preserving) transformations, similarities (euclidean transformations plus scaling), affine transformations, and projective ones. We will assume that the reader is familiar with their analytical form.
\smallskip

\mypar{Line geometry.}
The set of all lines of $\PP^3$ forms a four-dimensional variety, the   {\em Grassmannian} of lines, denoted by $\G$. The line passing through two distinct points $\vect{x}$ and $\vect{y}$ in $\PP^3$ can
be represented using {\em Pl\"ucker coordinates} by writing $\vect l = (l_{41}, l_{42}, l_{43}, l_{23}, l_{31}, l_{12})$ where $l_{ij}=x_iy_j-x_jy_i$. This defines a point in $\PP^5$ that is independent of the choice of $\vect{x}$ and $\vect{y}$ along $\vect{l}$. Moreover, the coordinates of any line $\vect{l}$ satisfy the constraint $\vect l \cdot \vect l^*=0$, where $\vect l^*= ( l_{23}, l_{31}, l_{12},l_{41}, l_{42}, l_{43})$ is the {\em dual line} for $\vect l$. Conversely any point in $\PP^5$ with this property represents a line, so $\G$ can be identified with a quadric hypersurface in $\PP^5$. The {\em join} ($\vee$) and {\em meet} ($\wedge$) operators are used to indicate intersection and span of linear spaces (points, lines, planes). For example, $\vect x \vee \vect y$ denotes the unique line passing through $\vect x$ and $\vect y$. To give analytic formulae for these operators, it useful to introduce the {\em primal and dual Pl\"ucker matrices} for a line $\vect l$: these are defined respectively as $\vect L = \vect x \vect y^T - \vect y \vect x^T$ and $\vect L^*=\vect u \vect v^T - \vect v \vect u^T$, where $\vect x, \vect y$ are any two points on $\vect l$, and $\vect u, \vect v$ are any two planes containing $\vect l$ (so $\vect l= \vect x \vee \vect y = \vect u \wedge \vect v$). With these definitions, the join $\vect l \vee \vect z$ of $\vect l$ with a point $\vect z$ is the plane with coordinates $\vect L^*\vect z$, while the meet $\vect l \wedge \vect w$ with a plane $\vect w$ is the point with coordinates $\vect L \vect w$.

\section{A physical model for rational cameras\label{sec:approach}}

For convenience of the reader, we briefly recall in Section~\ref{sec:abstract} some results from~\cite{ponce2016congruences}. We then proceed to new results
in Sections~\ref{sec:concrete} and~\ref{sec:concrete2}.

\subsection{Cameras and line congruences\label{sec:abstract}}
\mypar{Congruences.} A {\em line congruence} is a two-dimensional family of lines in $\PP^3$, or a surface in $\G$~\cite{Kummer66}. Since a general camera produces a two-dimensional image, the set of viewing rays captured by the imaging device will form a congruence. We will only consider {\em algebraic} congruences that are defined by polynomial equations in $\G$. Every such congruence $L$ is associated with two non-negative integers $(\alpha,\beta)$: the {\em order} $\alpha$ is the number of rays in $L$ that pass through a generic point of $\PP^3$, while the {\em class} $\beta$ is the number of lines in $L$ that lie in a generic plane. The pair $(\alpha,\beta)$ is the {\em bidegree} of the congruence. The set of points that {\em do not} belong to $\alpha$ distinct lines is the {\em focal locus} $F(L)$.

\vspace{.1cm}
\mypar{Geometric rational cameras.}
Order one congruences (or $(1,\beta)$-congruences) are a natural geometric model for most imaging systems~\cite{Batog11,BGP10,ponce2009camera}. Indeed, a $(1,\beta)$-congruence $L$ defines a {\em rational map} $\lambda_L:\PP^3 \dashrightarrow {\rm Gr}(1,3)$ associating a generic point $\vect x$ in $\PP^3$ with the unique $\lambda_L(\vect x)$ line in $L$ that contains $\vect x$.\footnote{A rational map between projective spaces is a map whose coordinates are homogeneous polynomial functions. In particular, this map is not well-defined on points where all these polynomials vanish. Here and in the following, we use a dashed arrow to indicate a rational map that is only well-defined on a dense, open set.} All possible maps of this form are described in~\cite{ponce2016congruences}. The Pl\"ucker coordinates of a line $\lambda_L(\vect x)$ are homogeneous polynomials (or {\em forms}) of degree $\beta + 1$ in the coordinates of $ \vect x$. For example, a family of $(1,\beta)$-congruences are defined by an algebraic curve $\gamma$ of degree $\beta$ and a line $\delta$ intersecting $\gamma$ in $\beta-1$ points~\cite{DePoi04}. Its rays are the common transversals to $\gamma$ and $\delta$. The mapping $\lambda_L: \PP^3 \dashrightarrow \G$ can be expressed, using appropriate coordinates of $\PP^3$, by
\begin{equation}
\small\lambda_L(\vect{x})=\qmatrix{x_1\\x_2\\x_3\\x_4}\vee
\qmatrix{x_1f(x_1,x_2)\\ x_2f(x_1,x_2)\\ g(x_1,x_2)\\h(x_1,x_2)},
\end{equation}
where $f$, $g$ and $h$ are respectively forms of degree $\beta-1$, $\beta$ and $\beta$. The remaining $(1,\beta)$-congruences for $\beta > 0$ correspond to degenerations of this case, or to transversals to twisted cubics. These can be parameterized in a similar manner~\cite{ponce2016congruences}.

\smallskip

\mypar{Photographic rational cameras.}
Composing $\lambda_L: \PP^3 \dashrightarrow \G$ with an {\em arbitrary}
birational map $\nu:L\dashrightarrow\PP^2$ determines a map
$\psi=\nu\circ\lambda_L:\PP^3\dashrightarrow \PP^2$, which is taken as the definition of a general ``photographic'' rational camera in~\cite {ponce2016congruences}.
Although this model leads to effective
formulations of multi-view constraints, it is {\em abstract}, with no explicit link between $\vect{u}=\psi(\vect{x})$ and
the actual image point where the ray $\lambda_L(\vect{x})$ intersects
the sensing array.

\subsection{Rational cameras with retinal planes\label{sec:concrete}}

In this paper, we adapt the framework from \cite{ponce2016congruences} for studying {\em physical instances} of photographic cameras. We will refer to the map $\lambda_L$ defined by a $(1,\beta)$-congruence $L$ as the {\em essential} part of the imaging process (or an ``essential camera'' for short). The {\em auxiliary} part of the process requires choosing a {\em retinal plane} $\vect \pi$ in $\PP^3$. This determines a map $\mu_{\vect \pi}: L \dashrightarrow \vect \pi$ that associates any ray $\vect{l}$ in $L$ not lying in $\vect{\pi}$ with its intersection $\vect{y}=\vect{l}\wedge\vect{\pi}$ with that plane. For a generic choice of $\vect \pi$, this map is not defined on $\beta$ lines, since $\beta$ lines from $L$ will lie on $\vect{\pi}$ (recall the definition of $\beta$ from Section~\ref{sec:abstract}). Together, $\lambda_L$ and $\mu_{\vect \pi}$ (or, equivalently, $L$ and $\vect{\pi}$) define a mapping $\mu_{\vect \pi} \circ\lambda_L:\PP^3\dashrightarrow\vect{\pi}$ that can be taken as a definition of a {\em geometric rational camera}. Finally, an analytic counterpart of this model is obtained by picking a coordinate system $(\pi)$ on $\vect \pi$, that corresponds to the pixel grid where radiometric measurements are obtained.
Representing $(\pi)$ as a $4\times 3$ matrix $\vect{Y}=[\vect{y}_1,\vect{y}_2,\vect{y}_3]$, where columns correspond to basis points, the coordinates $\vect u$ in $\PP^2$ of any point $\vect{y}$ on $\vect{\pi}$ are given by $\vect{u}=\vect{Y}^\dagger\vect{y}$,
where $\vect{Y}^\dagger$ is a $3\times 4$ matrix in the
three-parameter set of pseudoinverses of $\vect{Y}$ (see for
example~\cite{ponce2009camera}). Note that both $\mu_{\vect \pi}$ and $\vect{Y}^\dagger$ are {\em linear}, and in fact a simple calculation shows that 
$\vect N=\vect{Y}^\dagger \circ \mu_{\vect \pi}$ is described by the $3 \times 6$ matrix
\begin{equation}\label{eq:plane_projection}\small
\vect{N}=\vect{Y}^\dagger \circ \mu_{\vect \pi} = \qmatrix{
(\vect{y}_2\vee\vect{y}_3)^{*T}\\
(\vect{y}_3\vee\vect{y}_1)^{*T}\\
(\vect{y}_1\vee\vect{y}_2)^{*T}}.
\end{equation}
This expression does not depend on $L$: it represents a linear map $\PP^5 \dashrightarrow \PP^2$ that associates a generic line $\vect{l}$ in $\PP^3$  with the coordinates $\vect{u}$ in $\PP^2$ of its intersection with $\vect \pi$.
%In turn this gives us a concrete (linear) map $\nu:L\dashrightarrow \PP^2$ defined by $\nu(\vect{l})=\vect{N}\vect{l}$.  Conversely, any matrix $\vect{N}$ whose rows represent the duals of three coplanar lines $\vect{l}_1$, $\vect{l}_2$ and $\vect{l}_3$ define a retinal planes containing them as well as a basis for this plane defined by the points $\vect{y}_1=\vect{l}_2\wedge\vect{l}_3$, $\vect{y}_2=\vect{l}_3\wedge\vect{l}_1$ and $\vect{y}_3=\vect{l}_1\wedge\vect{l}_2$. Note that the matrices $\vect{Y}^\dagger$ and $\vect{N}$ are of course intimately related to pinhole point and line projection matrices.

\begin{example}\label{ex:plane_projection_special} \rm Consider the plane $\vect{\pi}=\{x_3-x_4=0\}$ in $\PP^3$, equipped with the reference frame given by $\vect{y}_1=(1,0,0,0)^T$, $\vect{y}_2=(0,1,0,0)^T$, $\vect{y}_3=(0,0,1,1)^T$ (with fixed relative scale). The matrix $\vect{N}$  takes in this case the form
\begin{equation}\label{eq:plane_projection_special}
\small\vect{N}=\begin{bmatrix}
1 & 0 & 0 & 0 & -1 & 0  \\ 0  & 1 & 0 & 1 & 0 & 0 \\ 0 & 0 & 1 & 0 & 0 & 0
\end{bmatrix}.
\end{equation}
The null space of the corresponding linear projection is $\{p_{41}-p_{31}=p_{42}+p_{23}=p_{43}=0\}$, which characterizes lines contained in $\vect{\pi}$. \hfill $\diamondsuit$
\end{example}

In summary, a complete analytical model of a physical photographic rational camera is a map $\psi: \PP^3 \dashrightarrow \PP^2$ that is the composition of the essential map $\lambda_L: \PP^3 \dashrightarrow \G$, associated with a $(1,\beta)$-congruence $L$, and the linear map $\vect{N}$ of \eqref{eq:plane_projection}:
\begin{equation} \small
\vect{x}\mapsto \vect{u}=\psi(\vect{x})= \vect N \lambda_L(\vect{x}).  
\label{eq:direct} \end{equation}

By construction, the coordinates of $\psi(\vect{x})$ are forms of degree $\beta+1$ in $\vect{x}$.  One could of course define directly a rational camera in terms of such forms (and indeed rational cameras are commonly used in photogrammetry~\cite{hu2004understanding}). Note, however, that arbitrary rational maps do not, in general, define a camera, since the pre-image of a generic point $\vect{u}$ in $\PP^2$ should be a line.
From \eqref{eq:direct} we also easily recover the {\em inverse projection} $\chi: \PP^2 \dashrightarrow \G$ mapping a point in $\PP^2$ onto the corresponding viewing ray in $L$:
\begin{equation} \label{eq:inverse} \small
\vect{u}\mapsto\vect{l}=\chi(\vect u)=\lambda_L(\vect{Y}\vect{u}).  
\end{equation}
The Pl\"ucker coordinates of $\vect{l}$ are also forms of degree $\beta+1$ in $\vect{u}$. Equation (\ref{eq:inverse}) can be used to define epipolar or multi-view constraints for point correspondences, since it allows one to express incidence constraints among viewing rays in terms of the corresponding image coordinates \cite{Sturm11}. See also Section~\ref{sec:epipolar}. 

\begin{comment}
\begin{example}\rm A congruence of bidegree $(1,0)$ is the bundle of lines
passing through some point $\vect{c}$. Taking $\vect{c}=(0,0,0,1)^T$, the essential map is given by $\vect x \mapsto \vect x \vee \vect c =(x_1, x_2, x_3, 0,0,0)^T$. 
Using the retinal plane
and coordinate system of Example~\ref{ex:plane_projection_special},
the projection matrix takes yields the {\em standard pinhole
projection} described by the $3\times 4$-matrix $[\vect{Id} \, | \, \vect 0]$.
\hfill $\diamondsuit$
\end{example}
\end{comment}

To conclude, let us point out that although a rational camera $\psi: \PP^3 \dashrightarrow \PP^2$ was introduced in terms of a congruence $L$, a retinal plane $\vect \pi$, and a projective reference frame on $\vect \pi$, the retinal plane is often not (completely) determined given $\psi$. This is well known for pinhole cameras, and we will argue that a similar ambiguity arises for two-slit cameras. On the other hand, it is easy to see that the congruence $L$ and the global mapping $\vect N$ can always be recovered from the analytic expression of the camera: the congruence $L$ is in fact determined by the pre-images $\psi^{-1}(\vect u)$ under $\psi$ of points $\vect u$ in $\PP^2$, while $\vect N$ is described by $\vect N(\psi^{-1} (\vect u)) =\vect u$. %[to recover explicit formulae we need to manipulate equations in Pl\"ucker equations for defining $L$, but for a specific $\psi$ this is possible]

\subsection{Primitive camera models\label{sec:concrete2}}
The orbit of a rational camera under the action of any of the four natural projective subgroups defined earlier is a family of cameras that are geometrically and analytically equivalent under the corresponding transformations. %This is important since, contrary to a vector space with zero as its canonical origin, there is no preferred coordinate system for a projective space, whether it is equipped with an affine, euclidean or similarity structure or not.  
Each orbit will be called a {\em primitive camera model} for the corresponding subgroup, and we will attach to every such model, whenever appropriate and possible, a particularly simple analytical representative. A primitive camera model exhibits (analytical and geometric) {\em invariants}, that is, properties of all cameras in the same orbit that are preserved by the associated group of transformations. For example, the {\em intrinsic parameters} of a pinhole perspective camera are invariants for the corresponding euclidean primitive model; we will argue in the next section that this definition can in fact be applied for arbitrary rational cameras. %These are opposed to the ``extrinsic parameters'', which describe the location of the device with respect to a fixed (but often arbitrary) reference frame. 
Another familiar example is the {\em projection of the absolute conic}, which is an invariant for the similarity orbit of any imaging system. Indeed, the absolute conic, defined in any euclidean coordinate system by $\{x_4 = x_1^2+x_2^2+x_3^2=0\}$, is fixed by all similarity transformations. This property is commonly used for the self-calibration of pinhole perspective cameras~\cite{FauMay92,Pol04,Ponce04,Triggs97} but it can be used for more general cameras as well (see Section~\ref{sec:epipolar}).

To further illustrate these concepts, we describe classical primitive models for pinhole projections. We consider the standard pinhole projection $\psi(\vect x)=(x_1,x_2,x_3)^T$, associated with the $3\times 4$-matrix $[\text{Id} \, | \,\vect 0]$. Its projective orbit consists of all linear projections $\PP^3 \dashrightarrow \PP^2$, corresponding to $3\times 4$ matrices of full rank, so pinhole cameras form a primitive projective model. The affine orbit of $\psi$ is the family of {\em finite cameras}, \ie, cameras of the form $[\vect{A} \, | \, \vect{b}]$, where $\vect{A}$ is a $3\times 3$ full-rank matrix. The orbit of $\psi$ under similarities or under euclidean transformations is the set of cameras of the form $[\vect{R} \,|\, \vect{t}]$ where $\vect{R}$ is a rotation matrix. The euclidean and similarity orbits in this case coincide: this is of course related to the scale ambiguity in pictures taken with perspective pinhole cameras.  The euclidean/similarity invariants of a finite camera can be expressed as entries of a $3\times 3$ upper-triangular {\em calibration matrix} $\vect K$: this follows from the uniqueness of the RQ-decomposition $\vect{A}=\vect{K}\vect{R}$. Primitive models for cameras ``at infinity'' include {\em affine}, {\em scaled orthographic}, and {\em orthographic} projections: these are defined by the orbits under natural projective groups of $\psi'(\vect{x})=(x_1, x_2, x_4)^T$. Here the similarity and euclidean orbits are {\em different} (indeed, orthographic projections preserve distances). The intrinsic parameters can be expressed as entries of a $2\times 2$ calibration matrix (see \cite[Sec. 6.3]{hartley2003multiple}).

\medskip

The remaining part of the paper focuses on a particular class of rational cameras, namely two-slit cameras, for which the general concepts introduced above can be easily applied. Two-slit cameras correspond in fact to congruences of class $\beta=1$, so they can be viewed as the ``simplest'' rational cameras after pinhole projections.

\section{Two-slit cameras: a case study}
\label{sec:two-slit}

{\em Two-slit cameras}~\cite{feldman2003epipolar, gupta1997linear,Pajdla02b, weinshall2002new, ZFPW03} record viewing rays that are the transversals to two fixed skew lines (the slits). The bidegree of the associated congruence is $(1,1)$, since a generic plane will contain exactly one transversal to the slits, namely the line joining the points where the slits intersect the plane. This type of congruence is the intersection of $\G$ with a $3$-dimensional linear space in $\PP^5$ \cite{BGP10}. The map $\lambda_L$ associated with the slits $\vect{l}_1, \vect{l}_2$ is
\begin{equation}\label{eq:two_slit_geometric}\small
\vect{x} \mapsto \vect{l}=(\vect{x} \vee \vect{l}_1) \wedge (\vect{x} \vee \vect{l}_2) .
\end{equation}
We now fix a retinal plane $\vect \pi$ equipped with the coordinate system defined by $\vect{Y}=[\vect y_1, \vect y_2, \vect y_3]$. A rational camera $\psi: \PP^3 \dashrightarrow \PP^2$ is obtained by composing \eqref{eq:two_slit_geometric} with the $3\times 6$ matrix $\vect{N}$ as in \eqref{eq:plane_projection}. A simple
calculation shows that the resulting map can be written compactly as
\begin{equation}\label{eq:two_slit_non_intrinsic}\small
\vect x \mapsto \vect u =  \begin{bmatrix} \vect x^T \vect P_1^* \vect S_1 \vect P_2^* \vect x\\
\vect x^T \vect P_1^* \vect S_2 \vect P_2^* \vect x\\
\vect x^T \vect P_1^* \vect S_3 \vect P_2^* \vect x\\
\end{bmatrix},
\end{equation}
where $\vect P_1^*, \vect P_2^*$ are the dual Pl\"ucker matrices associated with the slits $\vect l_1, \vect l_2$, while $\vect S_1, \vect S_2, \vect S_3$ are Pl\"ucker matrices for $\vect y_2 \vee \vect y_3$, $\vect y_3 \vee \vect y_1$, $\vect y_1\vee \vect y_2$ respectively  \cite{feldman2003epipolar, ZFPW03}.

\begin{example}\label{ex:two_slit_finite} \rm Let us fix the slits to be the lines $\vect{l}_1=\{x_1=x_3=0\}, \vect{l}_2=\{x_2=x_3+x_4=0\}$. The corresponding essential map $\PP^3 \dashrightarrow \G$ is given by
\begin{equation}\small
\small\lambda(\vect{x})=(x_1 (x_3+x_4), x_2 x_3, x_3(x_3+x_4), x_2 x_3,  0, -x_1x_2)^T
\end{equation}
Composing with the projection $\vect N$ in Example~\ref{ex:plane_projection_special}, we obtain the formula for a rational camera with slits $\vect{l}_1, \vect{l}_2$:
\begin{equation}\label{eq:two_slit_finite}\small
\small\vect{u}=\psi(\vect{x})=(x_1 (x_3+ x_4), 2x_2 x_3,  x_3(x_3 + x_4))^T
\end{equation}  
The same expression can be deduced from \eqref{eq:two_slit_non_intrinsic}.
\hfill $\diamondsuit$
\end{example}

\begin{figure}[t]
\begin{center}
   \includegraphics[width=0.7\linewidth]{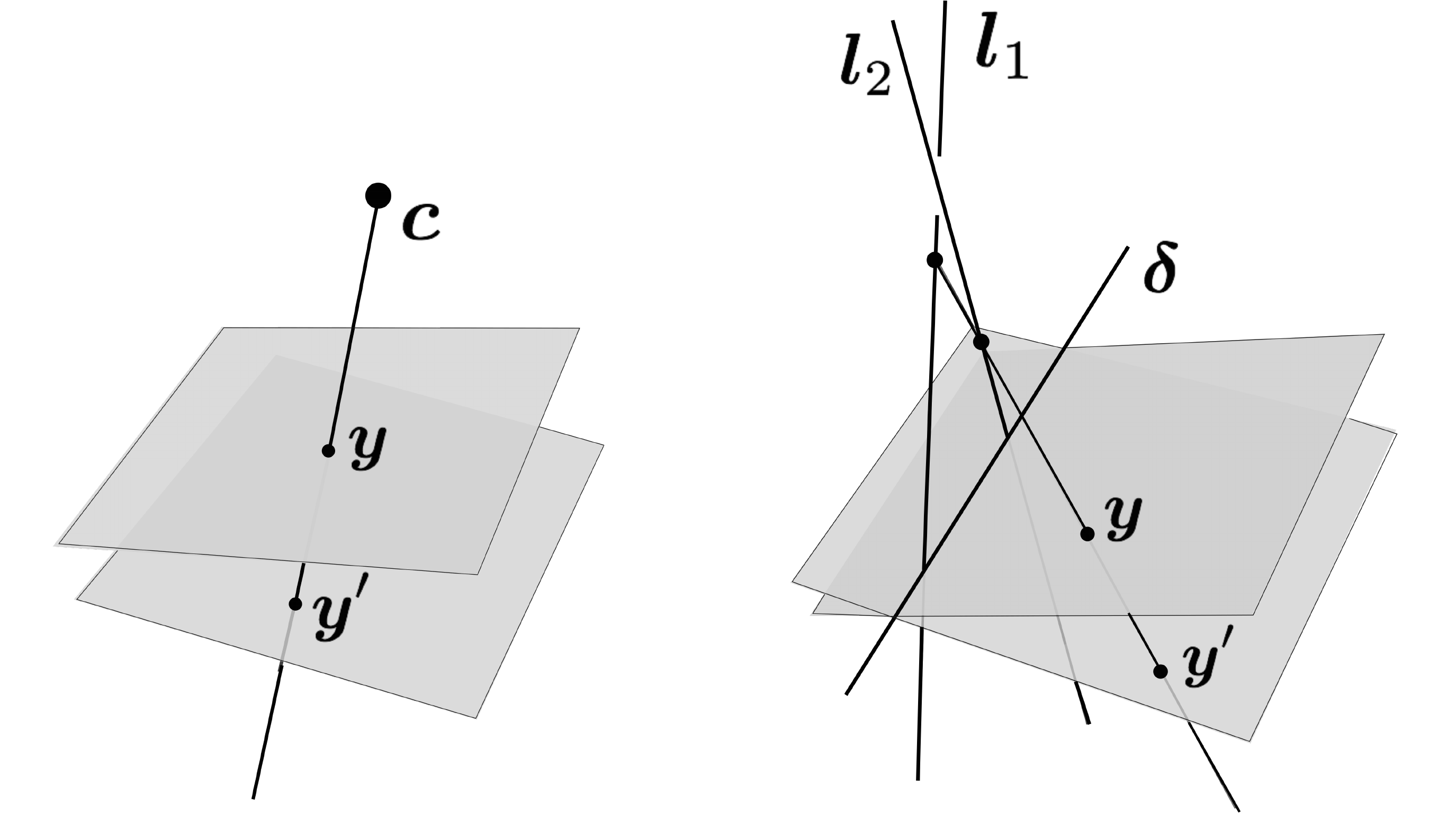}
\end{center}
   \caption{{\em Left:} A pinhole $\vect c$ induces a homography between any two retinal planes not containing $\vect c$. {\em Right:} Two skew lines $\vect l_1, \vect l_2$ induce a  homography between planes intersecting at a transversal $\vect \delta$ to $\vect l_1, \vect l_2$.}
\label{fig:retinal-plane}
\vspace{-.2cm}
\end{figure}

It is noted in \cite{feldman2003epipolar, ZFPW03} that using a different retinal plane $\vect \pi$ in \eqref{eq:two_slit_non_intrinsic} corresponds, in general, to composing the rational camera $\psi:\PP^3 \dashrightarrow \PP^2$ with a {\em quadratic} change of image coordinates $\PP^2 \dashrightarrow \PP^2$. However, this transformation is in fact {\em linear} when $\vect \pi$ and $\vect \pi'$ intersect along a transversal to the slits. This follows from the following property:

\begin{lemma} Let $\vect l_1, \vect l_2$ be two skew lines in $\PP^3$. For any point $\vect x$ not on the these lines, we indicate with $\lambda(\vect x)$ the unique transversal to ${\vect l_1,\vect l_2}$ passing through $\vect x$. If $\vect \pi$ and $\vect \pi'$ are two planes intersecting at a line $\vect \delta$ that intersects $\vect l_1$ and $\vect l_2 $, then the map $f: \vect{\pi} \dashrightarrow \vect{\pi}'$ defined, for points $\vect y$ not on $\vect \delta$, as
\begin{equation}\label{eq:two_slit_homography}\small
f(\vect y)=\lambda(\vect y) \wedge \vect \pi',
\end{equation}
can be extended to a homography between $\vect \pi$ and $\vect \pi'$.
\end{lemma}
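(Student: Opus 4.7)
The plan is to construct an explicit projective transformation $M$ of $\PP^3$ that fixes both slits pointwise and maps $\vect{\pi}$ to $\vect{\pi}'$, and then observe that its restriction to $\vect{\pi}$ agrees with $f$ on a dense open set and hence supplies the required extension.

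First, I would extract a consequence of the hypothesis on $\vect{\delta}$: since $\vect{\delta} \subset \vect{\pi} \cap \vect{\pi}'$ meets each $\vect{l}_i$ in a single point $\vect{a}_i$, and this point also lies in $\vect{l}_i \wedge \vect{\pi}$ and $\vect{l}_i \wedge \vect{\pi}'$ (each of which is a single point for a generic retinal plane), we get $\vect{a}_i = \vect{l}_i \wedge \vect{\pi} = \vect{l}_i \wedge \vect{\pi}'$ and $\vect{\delta} = \vect{a}_1 \vee \vect{a}_2$. In other words, $\vect{\pi}$ and $\vect{\pi}'$ share the same trace on each slit.

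Next, I would adapt coordinates to the slits, taking $\vect{l}_1 = \{x_1 = x_2 = 0\}$ and $\vect{l}_2 = \{x_3 = x_4 = 0\}$. In these coordinates the projective transformations of $\PP^3$ that fix both $\vect{l}_1$ and $\vect{l}_2$ pointwise are precisely the block-diagonal maps $M = \mathrm{diag}(\mu I_2, \nu I_2)$, a one-parameter family modulo scale. Rewriting the previous step in coordinates, the coefficient vectors of $\vect{\pi}$ and $\vect{\pi}'$ differ only by independent scalings on the blocks $(x_1, x_2)$ and $(x_3, x_4)$, and a short linear-algebra check shows that a single value of the ratio $\nu / \mu$ makes $M(\vect{\pi}) = \vect{\pi}'$.

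The final observation is that any such $M$ preserves every transversal to $\vect{l}_1, \vect{l}_2$ as a set: if $\ell \wedge \vect{l}_i = \vect{p}_i$, then $M(\ell)$ meets $\vect{l}_i$ at $M(\vect{p}_i) = \vect{p}_i$, so $M(\ell) = \vect{p}_1 \vee \vect{p}_2 = \ell$. Consequently, for any $\vect{y} \in \vect{\pi}$ outside the indeterminacy locus of $f$, the point $M(\vect{y}) \in \vect{\pi}'$ lies on $\lambda(\vect{y})$, and hence equals $\lambda(\vect{y}) \wedge \vect{\pi}' = f(\vect{y})$. Thus $M|_{\vect{\pi}}$ is a homography from $\vect{\pi}$ to $\vect{\pi}'$ that extends $f$. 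The step requiring the most care is the middle one — pinning down the correct member of the one-parameter family of candidate $M$'s — and it is exactly where the transversal hypothesis on $\vect{\delta}$ does its work.
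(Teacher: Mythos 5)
Your proof is correct, but it takes a genuinely different route from the paper's. The paper argues analytically: it fixes the ``intrinsic'' frame on $\vect\pi$ (with $\vect y_1=\vect l_2\wedge\vect\pi$, $\vect y_2=\vect l_1\wedge\vect\pi$), expands the inverse line projection $\chi(\vect u)$ in Pl\"ucker coordinates, and observes that upon meeting with $\vect\pi'$ the single quadratic term $u_1u_2(\vect p_2\wedge\vect q_2)$ dies precisely because $\vect p_2\wedge\vect q_2=\vect\delta$ lies in $\vect\pi'$, leaving the map $\vect u\mapsto u_1\vect y_1'+u_2\vect y_2'+u_3\vect y_3'$, i.e.\ the identity in suitable frames. (It also sketches a second, synthetic argument via the quadric of transversals to a line $\vect m\subset\vect\pi$, which meets $\vect\pi'$ in a conic containing $\vect\delta$ as a component.) You instead lift the problem to $\PP^3$: you exhibit the collineation $M=\mathrm{diag}(\mu I_2,\nu I_2)$ fixing both slits pointwise, show the transversality of $\vect\delta$ forces $\vect\pi$ and $\vect\pi'$ to cut each slit in the same point so that their coefficient vectors differ by independent block scalings, pin down $\nu/\mu$ so that $M(\vect\pi)=\vect\pi'$, and note that $M$ preserves every transversal setwise, hence restricts to $f$. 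All the steps check out (including the implicit genericity that neither plane contains a slit, which the paper's proof also assumes, and the fact that $\lambda(\vect y)\not\subset\vect\pi'$ for $\vect y\notin\vect\delta$ since $\vect\delta$ is the unique transversal in $\vect\pi'$). What each approach buys: the paper's computation hands you the explicit image frame $[\vect y_1',\vect y_2',\vect y_3']$ in which the homography is the identity, which is exactly what is reused later to argue that such planes define the same rational camera; your argument is more conceptual and shows something slightly stronger, namely that the homography is induced by an ambient projective transformation preserving the entire congruence of transversals, and it makes transparent why the transversality of $\vect\delta$ is the decisive hypothesis.
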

%\begin{proof}[Proof sketch.] It is sufficient to show that a generic line $\vect m$ in $\vect \pi$ is mapped to a line in $\vect \pi'$. If $\vect m$ is generic, it does not intersect $\vect l_1$ or $\vect l_2$, and the union of the common transversals to $\vect l_1,\vect l_2, \vect m$ is a quadric in $\PP^3$. The intersection of this quadric with $\vect \pi'$ will have degree two, however it is easy to see that it contains the line $\vect l$, and hence it is reducible. Since $\vect l$ does not belong to the image of \eqref{eq:two_slit_homography}, we deduce that (the closure of) image of $\vect m$ is a line.
%\end{proof}

This Lemma also implies that two retinal planes that intersect at a transversal to the slits can define the same rational camera (using appropriate coordinate systems). Note the similarity with the traditional pinhole case, where the choice of the retinal plane is completely irrelevant since the pinhole $\vect c$ induces a homography between any planes $\vect \pi, \vect \pi'$ not containing $\vect c$. See Figure~\ref{fig:retinal-plane}.

\subsection{A projective model using linear projections}

Contrary to the case of pinhole cameras, two-slit cameras of the form \eqref{eq:two_slit_non_intrinsic} are {\em not} all projectively equivalent. This can be argued by noting that the coordinates $\vect u_1, \vect u_2$ in $\PP^2$ of the points $\vect y_1 = \vect l_1 \wedge \vect \pi$, $\vect y_2=\vect l_2 \wedge \vect \pi$ (the intersections of the slits with the retinal planes) are always preserved by projective transformations of $\PP^3$.
% (\ie, they are {\em projective} invariants): in particular, it is not possible to choose an arbitrary coordinate frame on $\vect \pi$, within a primitive projective model. 
For Batog {\em et al.}~\cite{Batog11,BGP10}, the coordinates $\vect u_1, \vect u_2$ are ``intrinsic parameters'' of the camera; indeed, they are {\em projective} intrinsics (\ie, invariants) of a two-slit device. Batog {\em et al.} also argue that choosing the points $\vect y_1$, $\vect y_2$ as points in the projective basis on $\vect \pi$ leads to simplified analytic expressions for the projection. Here, we develop this idea further, and observe that any two-slit camera with this kind of coordinate system can always be described by a pair of {\em linear} projections. 
More precisely, for any retinal plane $\vect{\pi}$, let us fix a coordinate system $\vect Y=[\vect y_1, \vect y_2, \vect y_3]$ where $\vect{y}_1=\vect{l}_2 \wedge\vect{\pi}$, $\vect{y}_2=\vect{l}_1 \wedge \vect{\pi}$ and $\vect y_3$ is arbitrary: in this case, a straightforward computation shows that \eqref{eq:two_slit_non_intrinsic} reduces to 
\begin{equation}\label{eq: two_slit_projective}
{\small
\vect x \mapsto  \qmatrix{u_1\\u_2\\u_3} =
\qmatrix{
(\vect p_1^T \vect x)  \,\, (\vect q_2^T \vect x)\\
(\vect p_2^T \vect x) \,\, (\vect q_1^T \vect x) \\
(\vect p_2^T \vect x) \,\, (\vect q_2^T \vect x)}
 = \qmatrix{
\vect p_1^T \vect x / \vect p_2^T \vect x\\
\vect q_1^T \vect x/\vect q_2^T \vect x \\
1},
}
\end{equation}
where $\vect p_1 = (\vect{l}_1 \vee \vect y_3)$, $\vect p_2 = -( \vect{l}_1 \vee \vect y_1)$, $\vect q_1 =  (\vect{l}_2 \vee \vect y_3) $, $\vect q_2 = -(\vect{l}_2 \vee \vect y_2) $ are  vectors representing planes in $\PP^3$. It is easy to see that this quadratic map can be described using {\em two linear maps} $\PP^3 \dashrightarrow \PP^1$, namely
\begin{equation}
\small
\vect x \mapsto \qmatrix{u_1 \\ u_3} = \qmatrix{\vect p_1^T \vect x \\ \vect p_2^T \vect x} = \vect A_1 \vect x, \,\,
\vect x \mapsto \qmatrix{u_2 \\ u_3} = \qmatrix{\vect q_1^T \vect x \\ \vect q_2^T \vect x} = \vect A_2 \vect x.
\end{equation}
In other words, \eqref{eq: two_slit_projective} determines the $2\times 4$ matrices $\vect A_1$ and $\vect A_2$ up to two scale factors, and vice-versa. Since applying a projective transformation to $\vect x$ in~\eqref{eq: two_slit_projective} corresponds to a matrix multiplication applied to both $\vect A_1$ and $\vect A_2$, we easily deduce that every pair of $2 \times 4$-matrices of full rank and with disjoint null-spaces corresponds to a two-slit camera, and that all these cameras are projectively equivalent. The two $2\times 4$ matrices for a two-slit camera are analogues of the $3\times 4$ matrix representing a pinhole camera: for example, the slits are associated with the null-spaces of these two matrices.\footnote{The linear maps $\PP^3 \dashrightarrow \PP^1$ correspond in fact to the ``line-centered'' projections for the two slits. The action of a two-slit camera is arguably more natural viewed as a map $\PP^3 \dashrightarrow \PP^1\times \PP^1$, however we chose to maintain $\PP^2$ as the image domain, since it is a better model for the retinal plane used in physical devices.} For two given projection matrices, the retinal plane may be any plane containing the line $\{\vect p_2^T \vect x=\vect q_2^T \vect x=0\}$: this is the line through $\vect y_1$ and $\vect y_2$, and is the locus of points where the projection is not defined. This completely describes a primitive projective model with $7+7=14$ degrees of freedom. More precisely, there are $8$ degrees of freedom corresponding to the choice of the slits, $2$ for the intersection points of the retinal plane with the slits, and $4$ for the choice of coordinates on the plane (since two basis points are constrained). 

In the remainder of the paper, we will always assume that a two-slit photographic camera is of the form \eqref{eq: two_slit_projective}. This is equivalent to knowing the ``projective intrinsic parameters''~\cite{BGP10}, namely the coordinates of $\vect l_1 \wedge \vect \pi$, $\vect l_2 \wedge \vect \pi$. %Moreover, we will see that natural two-slit devices, such as cameras with a retinal plane parallel to the slits or pushbroom cameras, already belong to this model. 
We will also identify a camera with its two associated projection matrices.

\begin{example}\rm The two-slit projection from Example~\ref{ex:two_slit_finite}  is of the form \eqref{eq: two_slit_projective} with
\begin{equation}\label{eq:example_two_slit_finite}\small
\vect A_1=\begin{bmatrix} 1 & 0 & 0 & 0 \\ 0 & 0 & 1 & 0
\end{bmatrix},
\vect A_2=\begin{bmatrix} 0 & 2 & 0 & 0 \\ 0 & 0 & 1 & 1
\end{bmatrix}.
\end{equation}
%From this representation we easily recover the slits $\vect{l}_1=\{x_1=x_3=0\}, \vect{l}_2=\{x_2=x_3+x_4=0\}$ of the camera. 
The retinal plane belongs to the pencil of planes containing $\{x_3=x_3+x_4=0\}$, \ie, it is a plane of the form $x_3-d x_4=0$. The choice $d=1$ is natural since points of the form $[x_1, x_2, 1, 1]$ are mapped to $[x_1, x_2, 1]$.
\hfill $\diamondsuit$
\end{example}

%It is also interesting to observe that if we allow for the two $2 \times 4$ projection matrices to have intersecting null-spaces, then the corresponding viewing rays all meet at the intersection of the slits, and we recover another representation for a pinhole camera. Hence, the two-slit model can actually be viewed as an extension of the traditional pinhole model. For example, for a pair of linear projections of the form $\vect x \mapsto (\vect p^T \vect x,  \vect r^T \vect x)^T$, $\vect x \mapsto (\vect q^T \vect x, \vect r^T \vect x)^T$, the expression \eqref{eq: two_slit_projective} yields a traditional linear pinhole camera. 
%On the other hand, using $\vect x \mapsto [\vect r^T \vect x :  \vect p^T \vect x]$, $\vect x \mapsto [\vect r^T \vect x : \vect q^T \vect x]$, %we obtain a {\em non-linear pinhole camera}, that is, a non-linear rational camera where all viewing rays are focused at a point. An %example is $\vect{x} \mapsto [x_2x_3 : x_3 x_1 : x_1 x_2]$. 

%The inverse line projection for the general projective two slit in \eqref{eq:two_slit_projective} is given by
%{\small
%\begin{equation}\label{eq:inverse_line_two_slit}
%\begin{aligned}
%&[u_1 : u_2 : u_3]  \mapsto u_3^2 \left((\vect{M}_1)_1 \wedge (\vect{M}_2)_1\right) - u_2 u_3 \left((\vect{M}_1)_1 \wedge (\vect{M}_2)_2\right)\\
%& \qquad -  u_1 u_3 \left((\vect{M}_1)_2 \wedge (\vect{M}_2)_1\right) +  u_1 u_2 \left((\vect{M}_1)_2 \wedge (\vect{M}_2)_2\right),
%\end{aligned}
%\end{equation}
%}
%where $(\vect{M}_i)_j$ denotes the $j$-th row of $\vect{M}_i$.

\subsection{Orbits and calibration matrices}
\label{sec:orbits}

Using the linear model introduced above, we can easily describe affine, similarity, and euclidean orbits for two-slit cameras. For example, the affine orbit of the device in \eqref{eq:two_slit_finite}, \eqref{eq:example_two_slit_finite} corresponds to
 \begin{equation}\label{eq:affine_two_slit}\small
\vect A_1 = \begin{bmatrix} \vect{m}_1^T  & t_1  \\ \vect{m}_3^T & t_3 
\end{bmatrix}, \vect A_2 = 
\begin{bmatrix}
\vect{m}_2^T & t_2   \\ \vect{m}_3^T & t_4
\end{bmatrix},
\end{equation}
where $\vect{m}_i$ are arbitrary $3$-vectors. This is the family of two-slit cameras where the retinal plane is parallel to the slits: indeed, although this plane is not completely determined, it is constrained to contain the line $\{[\vect{m}_3^T,t_3]  \vect x=[\vect m_3^T, t_4] \vect x=0\}$, that intersects both slits. We will refer to \eqref{eq:affine_two_slit} as a {\em parallel} two-slit camera. These cameras form an affine model with $12$ degrees of freedom.

We now consider the family of (euclidean) parallel cameras of the form
\begin{equation}\label{eq:matrices_finite_two_slit}\small
\vect A_1 = \begin{bmatrix}
1 &0 &0&0  \\ 0 & 0 & 1 & 0 
\end{bmatrix},
\,\, \vect A_2 = \begin{bmatrix}
2\cos \theta & 2\sin \theta & 0 & 0   \\ 0 & 0 & 1 & d
\end{bmatrix}.
\end{equation}
for $d\neq 0$ and $0<\theta<2\pi$ (and $\theta \ne \pi$). 
%The corresponding camera projection can be written explicitly as 
%\begin{equation}\label{eq:rational_finite_two_slit}\small
%\vect x \mapsto \left( \frac{x_1}{x_3} , \frac{2(\cos \theta x_1 + \sin \theta x_2)}{(x_3+dx_4)} , 1 \right)^T.
%\end{equation}
The slits for this camera are at an angle of $\theta$ and distance $d$. %These expressions are chosen to be natural for the retinal plane $\{x_3-dx_4=0\}$ (\ie, a plane at distances $d$ and $2d$ from the two slits). 
Note that \eqref{eq:two_slit_finite} is of this form, with $\theta=\pi/2$ and $d=1$.

\begin{figure}[t]
\begin{center}
   \includegraphics[width=0.8\linewidth]{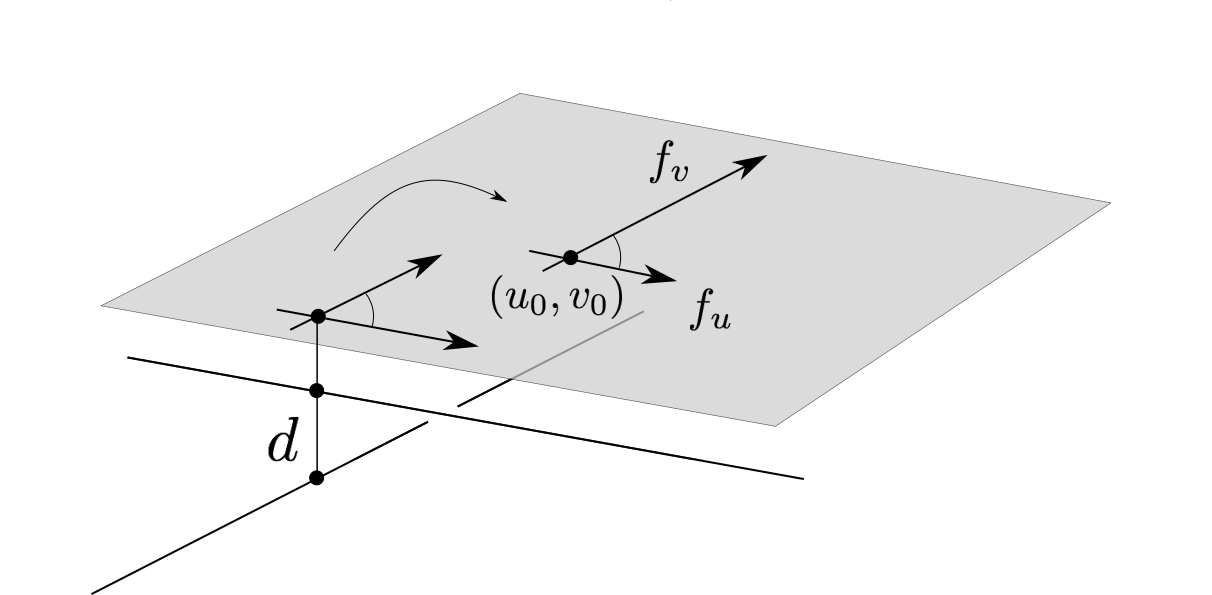}
\end{center}
\vspace{-.2cm}
   \caption{Physical interpretation of the entries the calibration matrices for parallel two-slit cameras: the parameters $f_u, f_v, u_0, v_0$ describe the change of retinal plane coordinates, with respect to some camera in the euclidean orbit of \eqref{eq:matrices_finite_two_slit}.}
\label{fig:intrinsic-parameters}
\vspace{-.3cm}
\end{figure}

Using \eqref{eq:matrices_finite_two_slit} as a family of canonical euclidean devices, we can introduce expressions for the ``intrinsic parameters'' of two-slit cameras. %In the following, we equip the retinal plane with the {\em affine} coordinates $(u,v)$ with $u=u_1/u_3$ and $v=u_2/u_3$. 

\begin{proposition}\label{prop:calibration_finite_two_slit} If $\vect{A}_1$, $\vect{A}_2$ describe a parallel two-slit camera~\eqref{eq:affine_two_slit}, then we can uniquely write 
\begin{equation}\label{eq:two_slit_decomposition_full}\small
\vect{A}_1=\vect{K}_1\begin{bmatrix}
\vect{r}_1^T & t_1  \\ \vect{r}_3^T & t_3
\end{bmatrix}, \,\,
\vect{A}_2=\vect{K}_2\begin{bmatrix}
\vect{r}_2^T & t_2 \\ \vect{r}_3^T & t_4
\end{bmatrix},
\end{equation}
where $\vect{K}_1$ and $\vect{K}_2$ are upper-triangular $2\times 2$ matrices defined up to scale with positive elements along the diagonal, and  $\vect{r}_1, \vect{r}_2, \vect r_3$ are unit vectors, with $\vect r_3$ orthogonal to both $\vect r_1, \vect r_2$. Here, $\theta={\rm arccos}(\vect{r}_1 \cdot \vect{r}_2)$ is the {\em angle} between the slits, and $|t_4-t_3|$ is the {\em distance} between the slits. Moreover, if the matrices  $\vect{K}_1$ and $\vect{K}_2$ are written as
\begin{equation}\small
\vect{K}_1=\begin{bmatrix} f_u & u_0 \\ 0 & 1 \end{bmatrix}, 
\vect{K}_2=\begin{bmatrix} 2f_v & v_0 \\ 0 & 1 \end{bmatrix},
\end{equation}
then $f_u, f_v$ can be interpreted as ``magnifications'' in the $u$ and $v$ directions, and $(u_0,v_0)$ as the position of the ``principal point''. See Figure~\ref{fig:intrinsic-parameters}.
\end{proposition}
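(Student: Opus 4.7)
The plan is to apply the classical RQ-decomposition to the left $2\times 3$ block of each projection matrix $\vect{A}_i$ separately, and then to exploit the shared row $\vect{m}_3^T$ to glue the two decompositions together through a common unit vector $\vect{r}_3$. Once the decomposition is in hand, the geometric interpretations follow from elementary line-geometry computations on the null spaces of the $\vect{A}_i$.

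More concretely, for $i=1,2$ write $\vect{A}_i=[\vect{M}_i\mid \vect{c}_i]$ where $\vect{M}_i=\left[\begin{smallmatrix}\vect{m}_i^T\\ \vect{m}_3^T\end{smallmatrix}\right]$ is the left $2\times 3$ block. In the non-degenerate case (the only one for a genuine two-slit camera), $\vect{M}_i$ has rank $2$, and the standard RQ-decomposition yields a unique factorization $\vect{M}_i=\vect{K}_i\vect{R}_i$ with $\vect{K}_i$ upper-triangular with positive diagonal and $\vect{R}_i=\left[\begin{smallmatrix}\vect{r}_i^T\\ \vect{s}_i^T\end{smallmatrix}\right]$ having orthonormal rows. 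Reading off the bottom row gives $\vect{m}_3^T=(\vect{K}_i)_{22}\,\vect{s}_i^T$ with $(\vect{K}_i)_{22}>0$ and $\|\vect{s}_i\|=1$, which forces $\vect{s}_i=\vect{m}_3/\|\vect{m}_3\|$ independently of $i$; I name this common vector $\vect{r}_3$, and by construction $\vect{r}_3\perp \vect{r}_1$ and $\vect{r}_3\perp \vect{r}_2$. The last column of \eqref{eq:two_slit_decomposition_full} is then determined by $(t_i,t_{i+2})^T=\vect{K}_i^{-1}\vect{c}_i$, which is well-defined since $\vect{K}_i$ is invertible. Uniqueness comes directly from the uniqueness of each RQ-decomposition, and the ``up to scale'' statement for $\vect{K}_i$ merely reflects the fact that $\vect{A}_i$ itself is only defined up to an overall scale.

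For the geometric interpretation, since $\vect{K}_i$ is invertible, the slit $\vect{l}_i$ (the null space of $\vect{A}_i$ in $\PP^3$) coincides with the null space of $\left[\begin{smallmatrix}\vect{r}_i^T & t_i\\ \vect{r}_3^T & t_{i+2}\end{smallmatrix}\right]$. In the affine chart $x_4=1$, each slit then lies in the plane $\vect{r}_3^T\vect{x}_{1:3}=-t_{i+2}$; because $\|\vect{r}_3\|=1$, these two planes are parallel and at distance $|t_4-t_3|$, and this is precisely the distance between the two skew slits (which is the length of their common perpendicular, attained because neither slit is parallel to $\vect{r}_3$). The affine direction of $\vect{l}_i$ is $\vect{r}_i\times \vect{r}_3$, which is a unit vector thanks to $\vect{r}_i\perp\vect{r}_3$; Lagrange's identity, combined with $\vect{r}_i\cdot\vect{r}_3=0$, then yields $(\vect{r}_1\times\vect{r}_3)\cdot(\vect{r}_2\times\vect{r}_3)=\vect{r}_1\cdot\vect{r}_2$, so the angle between the slits is indeed $\arccos(\vect{r}_1\cdot\vect{r}_2)=\theta$.

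Finally, to interpret the entries of $\vect{K}_1,\vect{K}_2$, I would compare a general parallel two-slit camera with the canonical euclidean one in \eqref{eq:matrices_finite_two_slit} (where $\vect{K}_1=\mathrm{diag}(1,1)$ and $\vect{K}_2=\mathrm{diag}(2,1)$, up to scale). Substituting the decomposed matrices into the analytic form \eqref{eq: two_slit_projective} shows that replacing the identity $\vect{K}_i$'s by general upper-triangular ones acts on the image coordinates as the affine change of variables $(u,v,1)\mapsto (f_u u+u_0,\,f_v v+v_0,\,1)$ (the factor of $2$ in $\vect{K}_2$ is absorbed so that the convention matches $\vect{K}_1$). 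This identifies $f_u,f_v$ as axis-aligned magnifications and $(u_0,v_0)$ as the shifted origin, i.e.\ the principal point, matching Figure~\ref{fig:intrinsic-parameters}. The main obstacle is simply making sure the two separate RQ-decompositions produce exactly the same $\vect{r}_3$; this is settled by the observation above that the shared vector $\vect{m}_3$ together with the positivity of the $(2,2)$-entries fixes both the magnitude and the sign of $\vect{r}_3$ in both decompositions at once.
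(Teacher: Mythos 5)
Your proof is correct, and its backbone coincides with the paper's: both obtain existence and uniqueness of the factorization from the RQ-decomposition of the left $2\times 3$ blocks, and both read off the meaning of $f_u,f_v,u_0,v_0$ by composing the canonical analytic form with general upper-triangular $\vect K_1,\vect K_2$ and observing the induced affine change of image coordinates. Where you genuinely diverge is the geometric interpretation of $\theta$ and $|t_4-t_3|$. The paper argues via orbits: it computes that the euclidean orbit of the canonical pair \eqref{eq:matrices_finite_two_slit} consists exactly of matrices of the form \eqref{eq:two_slit_decomposition_full} with $\vect K_1=\mathrm{Id}$, $\vect K_2=\mathrm{diag}(2,1)$, so the angle and distance of the canonical slits are inherited by the whole orbit because euclidean motions preserve them. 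You instead compute directly on the null spaces: the slits lie in the parallel planes $\vect r_3^T\vect x=-t_3$ and $\vect r_3^T\vect x=-t_4$ (giving the distance, with the common perpendicular along $\vect r_3$), and their directions $\vect r_i\times\vect r_3$ satisfy $(\vect r_1\times\vect r_3)\cdot(\vect r_2\times\vect r_3)=\vect r_1\cdot\vect r_2$ by Lagrange's identity (giving the angle). Your route is more self-contained and verifies the claims without appealing to the canonical family; the paper's route is shorter and makes the invariance of the intrinsics under the euclidean action explicit, which is the structural point the section is building toward. Two small remarks: your observation that the shared bottom row $\vect m_3^T$ together with positivity of the $(2,2)$-entries forces a common $\vect r_3$ is exactly the glue the paper leaves implicit, and is worth stating; and your restriction to the non-degenerate case (each $\vect M_i$ of rank $2$, i.e.\ both slits finite and not at infinity) matches the paper's implicit genericity assumption for parallel two-slit cameras.
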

\begin{comment}
\begin{proof} The decomposition exists and is unique because of elementary
properties of the RQ-decomposition. The euclidean orbit of a camera as in \eqref{eq:matrices_finite_two_slit} is given by matrices of the form
\begin{equation}\label{eq:two_slit_euclidean}\small
\begin{bmatrix}
\vect{r}_1^T & t_1  \\ \vect{r}^T & t_3
\end{bmatrix}, \,\,
\begin{bmatrix}
2\vect{r}_1'^T & t_2 \\ \vect{r}^T & t_3+d
\end{bmatrix},
\end{equation}
which decompose in \eqref{eq:two_slit_decomposition_full} with $\vect K_1, \vect K_2$ being identity matrices. Finally, applying non-trivial matrices $\vect K_1, \vect K_2$ to \eqref{eq:rational_finite_two_slit} yields
\begin{equation}\small
\vect x \mapsto \left( \alpha \frac{u_1}{u_3}+u_0 , \beta\frac{u_2}{u_3}+v_0, 1 \right)^T.
\end{equation}
From this we easily deduce the physical interpretations of the entries of $\vect K_1$ and $\vect K_2$.
\end{proof}
\end{comment}

The parameters $\theta$ and $d$, and the matrices $\vect{K}_1$ and $\vect{K}_2$ are clearly invariant under euclidean transformations. Moreover, within the parallel model \eqref{eq:affine_two_slit}, two cameras belong to the same euclidean orbit if and only if all of their parameters are the same. In fact, the $12$ degrees of freedom of a parallel camera are split into $6$ corresponding to the ``intrinsics'' $\theta$, $d$, $\vect{K}_1$ and $\vect{K}_2$, and $6$ for the ``extrinsic'' action of euclidean motion.\footnote{Intrinsic parameters describing euclidean orbits among more general (non-parallel) two-slit cameras can also be defined, but two more parameters are required. We chose to consider only two-slits with retinal plane parallel to the slits, since this is a natural assumption, and because the parameters have a simpler interpretation in this case.} Compared to the traditional intrinsic parameters for pinhole cameras, we note the absence of a term corresponding to the ``skewness'' of the image reference frame. Indeed, the angle between the two axes must be the same as the angle between the slits, as a consequence of the ``intrinsic coordinate system'' (the principal directions correspond in fact to the fixed basis points $\vect y_1, \vect y_2$ on the retinal plane). On the other hand, $\theta$ and $d$ do not have an analogue for pinhole cameras. %: this is explained by noting that the essential maps $\lambda:\PP^3 \dashrightarrow \G$ associated with finite pinhole cameras are all equivalent up to euclidean transformations (because points in euclidean space are equivalent), while this is not the case for two slits, where the distance and the angle between the slits are euclidean invariants. 
We will sometimes refer to $d$ and $\theta$ as the ``3D'' intrinsic parameters, since we distinguish them from the ``analytic'' intrinsic parameters, that are entries of the calibration matrices $\vect K_1, \vect K_2$, and differentiate (euclidean orbits of) cameras only based on analytic part of their mapping.  We also point out that for two-slit cameras in \eqref{eq:two_slit_decomposition_full}, the euclidean orbit and the similarity orbit do {\em not} coincide: this implies that when the intrinsic parameters are known, some information on the scale of a scene can be inferred from a photograph~\cite{Sturm11}.

%Any two-slit camera with $\vect{K}_1=\vect{K}_2=\vect{Id}$ and 3D-intrinsic parameters $d$ and $\theta$ projects the absolute conic $\Omega$ in $\PP^3$ to the curve $\omega = \{4 u_1^2 + u_2^2 + 4 u_3^2 \sin \theta - 4 u_1 u_2 \cos \theta =0\}$ in $\PP^2$ (note that this does not depend on $d$, since $d$ is not an invariant for similarities). More generally, the image of the absolute conic can be obtained from $\omega$ applying the (non-linear) image transformation corresponding to $\vect{K}_1, \vect{K}_2$. {\tt [write down explicitly?]}

\begin{comment}
\begin{example}\rm Consider the rational map:
\begin{equation}
\vect x \mapsto \left [ \frac{(x_1+2x_3)}{x_1}, \frac{((\sqrt 2 - 1)x_1 + \sqrt 2 x_2 - 2 x_4)}{(x_1+2x_4)}, 1 \right].
\end{equation}
This is a parallel two-slit camera, corresponding to the decomposition
\begin{equation}
\begin{bmatrix} 2 & 1 \\ 0 & 1
\end{bmatrix}
\begin{bmatrix} 0 & 1 & 0 & 0 \\ 1 & 0 & 0 & 0
\end{bmatrix},
\begin{bmatrix} 1 & -1 \\ 0 & 1
\end{bmatrix}
\begin{bmatrix} \sqrt 2 & \sqrt 2 & 0 & 0 \\ 1 & 0 & 0 & 2
\end{bmatrix}.
\end{equation}
The slits are at distance $d=2$ and angle $\theta={\rm arc}\cos(\sqrt 2/2)=\pi/4$.
 \hfill $\diamondsuit$
\end{example}
\end{comment}

\mypar{Pushbroom cameras.} Pushbroom cameras are degenerate class of projective two-slit cameras, in which one of the slits lies on the plane at infinity~\cite{feldman2003epipolar}. This is quite similar to the class affine cameras for perspective projections. Pushbrooms are handled by our projective model \eqref{eq: two_slit_projective}, but not by our affine one \eqref{eq:affine_two_slit}, where both slits are necessarily finite. We thus introduce another affine model, namely
\begin{equation}\label{eq:pushbroom_standard}\small
\vect A_1=\begin{bmatrix}
\vect{m}_1^T & t_1  \\ \vect{0} & 1 
\end{bmatrix}, \,\,
\vect A_2 =\begin{bmatrix}
\vect{m}_2^T & t_2 \\ \vect{m}_3^T & t_3   
\end{bmatrix},
\end{equation}
where $\vect{m}_1, \vect{m}_2, \vect{m}_3$ are arbitrary $3$-vectors. All such cameras are equivalent up to affine transformations, so this describes an affine model with $11$ degrees of freedom. The corresponding rational cameras can be written as
\begin{equation}\label{eq:affine_pushbroom}\small
\vect x \mapsto \left( [\vect m_1^T,t_1] \vect x , \frac{[\vect m_2^T, t_2] \vect x}{[\vect m_3^T,t_3] \vect x} , 1 \right)^T.
\end{equation}
This coincides with the linear pushbroom model proposed by Hartley and Gubta \cite{gupta1997linear}, who identify \eqref{eq:affine_pushbroom} with the $3\times 4$ matrix with rows $[\vect m_1^T,t_1],[\vect m_2^T,t_2], [\vect m_3^T,t_3]$. %This association however is somewhat unnatural since the rows of the $3 \times 4$ matrix can be scaled differently (the second and third are defined up to scale, while the first row is fixed).

% SAY SOMETHING ABOUT RETINAL PLANE

Let us now consider a family of affine pushbroom cameras of the form
\begin{equation}\small
\begin{bmatrix}
\sin \theta & \cos \theta & 0 & 0 \\ 0 & 0 & 0 & 1
\end{bmatrix}
\begin{bmatrix}
0 & 1 & 0 & 0 \\ 0 & 0 & 1 & 0
\end{bmatrix},
\end{equation}
for  $0<\theta<2\pi$ (and $\theta \ne \pi$). %$0<\theta<\pi$. 
This represents a pushbroom camera where the direction of movement is at an angle $\theta$ with respect to the parallel scanning planes. We use this family of canonical devices to define calibration matrices and intrinsic parameters.

\begin{proposition} Let $\vect A_1, \vect A_2$ define a pushbroom camera as in \eqref{eq:pushbroom_standard}, and let us also assume that $\vect m_1$ and $\vect m_3$ are orthogonal.\footnote{The more general case can also be described, but presents some technical difficulties. See the supplementary material for a discussion.} We can uniquely write
\begin{equation}\small
\vect A_1=\vect{K}_1 \begin{bmatrix}
\vect{r}_1^T & t_1 \\  \vect{0} & 1
\end{bmatrix},
\vect A_2=\vect{K}_2 \begin{bmatrix}
\vect{r}_2^T & t_2   \\ \vect{r}_3^T & t_3
\end{bmatrix}.
\end{equation}
where $\vect{K}_1={\rm diag}(1/v,1)$, $\vect{K}_2=\begin{bmatrix} f & u \\ 0 & 1 \end{bmatrix}$ (with positive $v $ and $f$) and $\vect{r}_1, \vect{r}_2, \vect r_3$ are unit vectors, with $\vect r_3$ orthogonal to both $\vect r_1, \vect r_2$. Here, $\theta={\rm arccos}(\vect{r}_1 \cdot \vect{r}_2)$ is the {\em angle} between the two slits (or between the direction of motion of the sensor and the parallel scanning planes). Moreover, $v$ can be interpreted as the speed of the sensor, and $f$ and $u$ as the magnification and the principal point of the 1D projection.
\end{proposition}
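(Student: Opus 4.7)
The plan is to mirror the proof of Proposition~\ref{prop:calibration_finite_two_slit}, performing a block-triangular (RQ-type) decomposition on $\vect A_1$ and $\vect A_2$ independently. For $\vect A_1$ I would first use its projective scale to normalize the bottom row to be exactly $[\vect 0,\,1]$; equating the result with $\vect K_1$ times the corresponding rotational block and imposing $\|\vect r_1\|=1$ and $v>0$ then forces $v=1/\|\vect m_1\|$, $\vect r_1=\vect m_1/\|\vect m_1\|$, and $s_1=v\,t_1$, uniquely. For $\vect A_2$ I would instead rescale so that $\|\vect m_3\|=1$, read off $\vect r_3=\vect m_3$, and observe that the standing assumption $\vect m_1\perp\vect m_3$ is precisely the required condition $\vect r_3\perp\vect r_1$. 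Matching the first row of $\vect A_2$ with $f\,[\vect r_2^T,\,s_2]+u\,[\vect r_3^T,\,t_3]$ under the constraints $\vect r_2\perp\vect r_3$, $\|\vect r_2\|=1$, and $f>0$ yields in order $u=\vect m_3\cdot\vect m_2$, $f=\|\vect m_2 - u\,\vect m_3\|$, $\vect r_2=(\vect m_2-u\,\vect m_3)/f$, and $s_2=(t_2-u\,t_3)/f$, giving both existence and uniqueness.

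For the physical interpretation, I would substitute the decomposition directly into~\eqref{eq:affine_pushbroom}. The first image coordinate becomes $([\vect r_1^T,\,s_1]\,\vect x)/v$: along the direction $\vect r_1$ of motion, a unit world displacement produces an image displacement of $1/v$, which justifies calling $v$ the sensor speed. The second image coordinate factors as $f\cdot [\vect r_2^T,\,s_2]\,\vect x \big/ [\vect r_3^T,\,t_3]\,\vect x + u$, a 1D central projection onto the scanning line post-composed with the affine change of coordinate $s\mapsto f s+u$, so $f$ is its magnification and $u$ its principal point. For the angle, a direct computation on the canonical family gives $\vect r_1=(\sin\theta,\cos\theta,0)^T$ and $\vect r_2=(0,1,0)^T$, hence $\vect r_1\cdot\vect r_2=\cos\theta$; since both the decomposition and the geometric angle between the sensor motion and the scanning line are invariant under euclidean transformations, this identification propagates to every member of each euclidean orbit.

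The step I expect to be the main obstacle is verifying that the orthogonality hypothesis $\vect m_1\perp\vect m_3$ is precisely what allows the two blocks to be decomposed independently while still producing three mutually compatible unit vectors $\vect r_1,\vect r_2,\vect r_3$ with $\vect r_3$ orthogonal to both $\vect r_1$ and $\vect r_2$. Without this hypothesis, the direction $\vect r_3$ read off from $\vect A_2$ would generically fail to be orthogonal to $\vect r_1$, and one would be forced to reintroduce a coupling between the two projection matrices by running a joint Gram--Schmidt procedure across both blocks, at the cost of two additional intrinsic parameters, as noted in the footnote.
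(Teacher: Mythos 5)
Your proof is correct and follows essentially the same route as the paper's: uniqueness via a QR/RQ-type factorization of each $2\times 4$ block, the canonical euclidean orbit (with $\vect K_1,\vect K_2$ the identity) to identify $\theta$ with $\arccos(\vect r_1\cdot\vect r_2)$, and direct substitution of the calibration matrices into the projection formula for the interpretation of $v$, $f$ and $u$. The only difference is that you spell out the Gram--Schmidt formulas for $v,\vect r_1,u,f,\vect r_2$ explicitly and make precise why the hypothesis $\vect m_1\perp\vect m_3$ is exactly what lets the two blocks be factored independently, both of which the paper leaves implicit.
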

\begin{comment}
\begin{proof} The proof is similar to that of Proposition \ref{prop:calibration_finite_two_slit}. The decomposition is unique because of QR factorization of matrices. Any camera in our canonical family corresponds to $\vect K_1, \vect K_2$ being the identity, and all of the parameters are preserved by euclidean transformations. Finally, the physical interpretation of the parameters follows by noting that applying $\vect K_1, \vect K_2$ to any matrix of the form \eqref{eq:pushbroom_standard} yields
\begin{equation}\small
\vect x \mapsto \left( \frac 1 v [\vect m_1^T,t_1] \vect x, \alpha \frac{[\vect m_2^T,t_2] \vect x}{[\vect m_3^T,t_3]\vect x} + c, 1 \right)^T.
\end{equation}
\end{proof}
\end{comment}

 The entries $\vect{K}_1, \vect{K}_2$ are the ``analytic'' intrinsic parameters of the pushbroom camera, while $\theta$ is a ``3D'' intrinsic parameter.

\section{Two-slit cameras: algorithms}
\label{sec:epipolar}

In this section, we apply our study of two-slit cameras to develop algorithms for structure from motion (SfM). The epipolar geometry of two-slit cameras will be described in terms of a $2\times 2 \times 2 \times 2$ {\em epipolar tensor}. Previously, image correspondences between two-slit cameras have been characterized using a $6\times 6$ \cite{feldman2003epipolar}, or a $4 \times 4$ fundamental matrix~\cite{BGP10}. The latter approach, due to Batog {\em et al.}~\cite{BGP10}, is similar to ours, since it is based on the ``intrinsic'' image reference frame that we also adopt. However, the tensor representation has the advantage of being easily described in terms of the elements of the four $2 \times 4$ projection matrices, in a form that closely resembles the corresponding expression for the traditional fundamental matrix. The definition of the epipolar tensor was already given in \cite{ponce2016congruences} for image coordinates in $\PP^1 \times \PP^1$ (and without explicit links to physical coordinates). Here, we also observe that every such tensor identifies exactly {\em two} projective camera configurations:

\begin{theorem}\label{thm:epipolar} Let $({\vect A}_1,{\vect A}_2)$, $({\vect B}_1,{\vect B}_2)$ be two 
general projective two-slit cameras. The set of corresponding image points ${\vect u}$, ${\vect u'}$ in $\PP^2$ is characterized by the following relation:
{\small
\begin{equation} \label{eq:linF}\small
\sum_{ijkl} {f}_{ijkl} \, \begin{bmatrix} u_1 \\ u_3 \end{bmatrix}_i \begin{bmatrix} u_2 \\ u_3 \end{bmatrix}_j \begin{bmatrix} u'_1 \\ u'_3 \end{bmatrix}_k\begin{bmatrix} u'_2 \\ u'_3 \end{bmatrix}_l = 0,
\end{equation}}
where $\vect F=(f_{ijkl})$ is a $2 \times 2 \times 2 \times 2$ ``epipolar tensor''. Its entries are
 \begin{equation}\label{eq:entries_epipolar_tensor}\small
f_{ijkl}\,= \,(-1)^{i+j+k+l} \cdot 
\det \begin{bmatrix} ({\vect A}_1)_{3-i}^T \!\! & \!\! ({\vect A}_2)_{3-j}^T 
\!\!&\!\!  ({\vect B}_1)_{3-k}^T  
\!\!&\!\!  ({\vect B}_2)_{3-l}^T  \end{bmatrix}.
\end{equation}
%where $(i,\hat i), (j,\hat j), (k,\hat k), (l, \hat l)$ are pairs of distinct indices. 
Up to projective transformations of $\PP^3$ there are {\em two}
configurations $({\vect A}_1,{\vect A}_2), ({\vect B}_1,{\vect B}_2)$ 
compatible with a given epipolar tensor.
\end{theorem}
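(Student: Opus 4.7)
For the epipolar relation itself I would proceed via a rank-drop condition. A scene point $\vect{x}\in\PP^3$ is a common preimage of $\vect{u}$ and $\vect{u}'$ exactly when $\vect{A}_1 \vect{x}\propto (u_1, u_3)^T$, $\vect{A}_2 \vect{x}\propto (u_2, u_3)^T$, $\vect{B}_1 \vect{x}\propto (u'_1, u'_3)^T$, $\vect{B}_2 \vect{x}\propto (u'_2, u'_3)^T$. Eliminating each unknown proportionality constant turns these four relations into four scalar linear equations in $\vect{x}$, e.g.\ $\bigl(u_3 (\vect{A}_1)_1 - u_1 (\vect{A}_1)_2\bigr)\vect{x} = 0$. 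Stacked into a $4\times 4$ matrix $\vect{M}(\vect{u},\vect{u}')$, these admit a non-trivial solution iff $\det \vect{M}=0$. Each row of $\vect{M}$ has the form $\sum_{i=1,2}(-1)^{i}\bigl[\,\cdot\,\bigr]_{i}(\vect{A}_{\bullet})_{3-i}$, so multilinear expansion of the determinant row by row immediately yields equation~\eqref{eq:linF}, with the signs $(-1)^{i+j+k+l}$ and the shifted indices of~\eqref{eq:entries_epipolar_tensor} appearing automatically.

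For the second assertion I would start from a dimension count. A projective two-slit camera has $14$ parameters (Section~3.1), so ordered pairs of cameras have $28$; modding out by the $15$-dimensional group of projective transformations of $\PP^3$ leaves a $13$-dimensional moduli space of configurations, while the space of $2\times 2\times 2\times 2$ tensors up to scale has dimension $15$. Consequently the map ``configuration~$\mapsto$~tensor'' is generically finite-to-one and its image is a proper $13$-dimensional subvariety of $\PP^{15}$. To pin down the fiber cardinality I would fix a projective gauge that reduces $(\vect{A}_1, \vect{A}_2)$ to a canonical form, for instance by placing its two slits along a pair of coordinate lines of $\PP^3$ and prescribing its retinal frame; the relations \eqref{eq:entries_epipolar_tensor} then become polynomial equations in the $14$ remaining parameters of $(\vect{B}_1, \vect{B}_2)$ (modulo two scale factors), and the claim becomes that this system generically has exactly two solutions.

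The second solution should come from a natural tensor-preserving involution on the moduli of camera pairs. Inspection of \eqref{eq:entries_epipolar_tensor} suggests combining a simultaneous swap of the two rows of $\vect{B}_1$ and the two rows of $\vect{B}_2$ with a compensating projective transformation of $\PP^3$ that fixes the canonical $(\vect{A}_1, \vect{A}_2)$; checking that this preserves every minor in~\eqref{eq:entries_epipolar_tensor} is a direct sign calculation, and the construction is analogous to the ``twisted pair'' ambiguity of essential-matrix decompositions in the pinhole case. The main obstacle is the remaining uniqueness step: showing that no further discrete ambiguity appears, i.e.\ that the gauge-fixed polynomial system admits no more than these two solutions. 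The cleanest plan is to reduce the system to a single quadratic in one well-chosen auxiliary unknown (with the rest determined linearly) and verify that its discriminant is generically nonzero, so that the involution indeed pairs two distinct points of a generic fiber; the technical reduction is most efficiently checked by a symbolic Gröbner-basis computation.
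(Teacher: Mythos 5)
Your derivation of the epipolar relation is correct and is essentially the paper's argument in different clothing: the paper writes the inverse line projections $\chi(\vect u)$, $\chi(\vect u')$ as wedges of the planes $u_3(\vect A_1)_1-u_1(\vect A_1)_2$, etc., and imposes that the two lines meet, which is exactly the vanishing of your $4\times 4$ determinant $\det\vect M(\vect u,\vect u')$; the multilinear expansion then produces \eqref{eq:entries_epipolar_tensor} with the stated signs. So the first half of your plan goes through.

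The second half has a genuine gap. Your normalization and dimension count are fine (the paper normalizes the first rows of the four matrices to $\vect e_1,\dots,\vect e_4$, so that the second rows form a $4\times 4$ matrix $\vect C$ and the $f_{ijkl}$ become, up to sign, the \emph{principal minors} of $\vect C$), but the proposed involution is wrong. Swapping the two rows of $\vect B_1$ and $\vect B_2$ replaces $(\vect B_1)_{3-k}$ by $(\vect B_1)_{3-(3-k)}$ in \eqref{eq:entries_epipolar_tensor}, hence sends $f_{ijkl}$ to $\pm f_{ij(3-k)(3-l)}$: it \emph{permutes} the entries of the tensor rather than preserving it, and a compensating homography of $\PP^3$ only multiplies every entry by the same factor $\det\vect H$, so it cannot undo a permutation. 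The true source of the second solution is the theorem of Lin and Sturmfels \cite{lin2009polynomial} on recovering a matrix from its principal minors: generically, the matrices sharing all principal minors with $\vect C$ are exactly $\vect D^{-1}\vect C\vect D$ and $\vect D^{-1}\vect C^{T}\vect D$ with $\vect D$ diagonal. The diagonal conjugations are absorbed by the residual projective gauge, and the \emph{transpose} family gives the second configuration; note that transposition mixes the second rows of all four cameras simultaneously and is not a per-camera row operation. Your final uniqueness step is also optimistic: after fixing the gauge, the off-diagonal entries of $\vect C$ are determined by three independent quadratics (one for each pair $(c_{23},c_{32})$, $(c_{24},c_{42})$, $(c_{34},c_{43})$), giving $8$ candidates of which only $2$ satisfy the remaining minor constraints --- there is no reduction to a single quadratic, and without the principal-minor theorem (or an equivalent algebraic input) the count of two fibers does not follow.
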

\noindent {\em Proof sketch.} The definition of $\vect F$ follows by applying the incidence constraint for two lines to the ``inverse line projections'' \eqref{eq:inverse} of image points. See the supplementary material or \cite{ponce2016congruences} for details. The definition of $\vect F$ is clearly invariant under projective transformations of $\PP^3$. Hence, we may assume that
\begin{equation}\small
\label{eq:vecrep}
\begin{array}{l}
{\vect A}_1=\begin{bmatrix} 1 & 0 & 0 & 0 \\ c_{11} & c_{12} & c_{13} & c_{14} \end{bmatrix} ,\,\,
{\vect A}_2=\begin{bmatrix} 0 & 1 & 0 & 0 \\ c_{21} & c_{22} & c_{23} & c_{24} \end{bmatrix},\\
{\vect B}_1=\begin{bmatrix} 0 & 0 & 1 & 0 \\ c_{31} & c_{32} & c_{33} & c_{34} \end{bmatrix}. \,\,
{\vect B}_2=\begin{bmatrix} 0 & 0 & 0 & 1 \\ c_{41} & c_{42} & c_{43} & c_{44} \end{bmatrix}.
\end{array}
\end{equation}
The $16$ entries of $\vect F$ are now the principal minors (\ie, all minors obtained by considering subsets of rows and columns with the same indices) of the $4 {\times} 4$-matrix $\vect C = (c_{ij})$. Thus, determining the projection matrices $({\vect A}_1,{\vect A}_2)$, $({\vect B}_1,{\vect B}_2)$ corresponding to the tensor $\vect F$, is equivalent to finding the entries of a $4\times 4$-matrix given its principal minors. This problem is studied in~\cite{lin2009polynomial}. The set of all matrices with the same principal minors as $\vect C$ have the form $\vect D^{-1} \vect C \vect D$ or $\vect D^{-1} \vect C^T \vect D$, where $\vect D$ is a diagonal matrix. These two families of matrices, viewed as elements of \eqref{eq:vecrep}, correspond to two distinct projective configurations of cameras. %They are related by the non-linear map $\vect x \mapsto \sigma(\vect C^{-1} \vect x)$ where $\sigma(\vect x)=[1/x_1, 1/x_2, 1/x_3, 1/x_4]$. 
\qed

\smallskip
The set of all epipolar tensors forms a $13$-dimensional variety in $\PP^{15}$: this agrees with $14+14-15=13$, where $14$ represents the degrees of freedom of two-slit cameras, and $15$ is to account for projective ambiguity. Two equations are sufficient to characterize an epipolar tensor {\em locally}, however a result in~\cite{lin2009polynomial} implies that a complete algebraic characterization actually requires $718$ polynomials of degree $12$. 

Our study of canonical forms and calibration matrices in Section~\ref{sec:two-slit} also leads to a natural definition  of {\em essential tensors}: for example, an essential tensor could be defined by \eqref{eq:linF} where $(\vect{A}_1,\vect{A}_2), (\vect{B}_1,\vect{B}_2)$ are all of the form \eqref{eq:two_slit_decomposition_full} with $\vect K_1$, $\vect K_2$ being the identity. Proposition \ref{prop:calibration_finite_two_slit} then guarantees that for any pair of ``parallel'' two-slit cameras as in \eqref{eq:affine_two_slit}, we can uniquely write the epipolar tensor as
\begin{equation}\small
\vect{F}_{ijkl}=\vect{E}_{ijkl} (\vect{K}_{1\vect{A}})_i (\vect{K}_{2\vect{A}})_j (\vect{K}_{1\vect{B}})_k (\vect{K}_{2\vect{B}})_l
\end{equation}
where $\vect{E}_{ijkl}$ is an essential tensor. This closely resembles the analogous decomposition of fundamental matrices. Recovering an algebraic characterization of essential tensors, similar to the classical result that identifies essential matrices as fundamental matrices with two equal singular values, could be an interesting topic for future work.
\smallskip

\mypar{Structure from motion.}
Using Theorem~\ref{thm:epipolar}, we can design a {\em linear algorithm} for SfM, that proceeds as follows: (1)~Using at least 15 image point correspondences, estimate $\vect F$ linearly using \eqref{eq:linF}. (2) Recover two projective camera configurations that are compatible $\vect F$. Clearly, for noisy image correspondences, the linear estimate from step 1) will not be a valid epipolar tensor: a simple solution for this is to recover elements of $\vect C$ using only $13$ principal minors given by the entries of $\vect F$. More precisely, after setting $c_{12} = c_{13} = c_{14} = 1$ (and normalizing $\vect F$ so that $f_{2222}=1$), the elements on the diagonal and on the first column of $\vect C$ can be recovered from $\vect F$ using linear equalities. The remaining six entries are pairwise constrained by six elements of $\vect F$, leading to $8$ possible matrices $\vect C$. In an ideal setting with no noise, exactly two of the $8$ solutions will be consistent with the remaining two elements of $\vect F$ (more generally, we consider the two solutions that minimize an ``algebraic residual''). A preliminary implementation of this approach, presented in detail the supplementary material, confirms that projective configurations of two-slit cameras can be recovered from image correspondences. 
 It is also possible to design a {\em 13-point algorithm} that recovers projection matrices \eqref{eq:vecrep} 
 and the corresponding tensor $\vect F$ from a minimal amount of data,
namely $13$ point correspondences between images.
The set of linear tensors that satisfy \eqref{eq:linF} for $13$ correspondences is a two-dimensional linear space, and imposing constraints for being a valid epipolar tensor leads to a system of algebraic equations. According to \cite[Remark 14]{lin2009polynomial} this system has $28$ complex solutions for $\vect F$, which translate into $56$ matrices $\vect C=(c_{ij})$. Experiments using the computer algebra system {\tt Macaulay2}~\cite{Grayson:aa} confirm these theoretical results. 

\smallskip

\mypar{Self-calibration.} Any reconstruction based on the epipolar tensor will be subject to projective ambiguity. On the other hand, using results from Section~\ref{sec:two-slit}, it is possible to develop strategies for {\em self-calibration}. Let us assume that we have recovered a {\em projective} reconstruction of two-slit projections $\vect A^i_1, \vect A^i_2$ for $i=1,\ldots,n$, and also that we know that they are in fact (parallel) finite two-slit cameras. % with orthogonal slits (more generally, knowing the angles $\theta_i$ between the slits would suffice). 
Our goal is to find a  ``euclidean upgrade'', that is, a $4 \times 4$-matrix $\vect Q$ that describes the transition from a euclidean reference frame to the frame corresponding to our projective reconstruction. According to Proposition \ref{prop:calibration_finite_two_slit}, we may write %, for all $i=1,\ldots, k$
\begin{equation}\label{eq:autocalibration}\small
\begin{aligned}
&\vect A^i_1 \vect Q \vect \Omega^* \vect Q^T {\vect A^i_1}^T = \vect K_1^i {\vect K_1^i}^T\\
&\vect A^i_2 \vect Q \vect \Omega^* \vect Q^T {\vect A^i_2}^T = \vect K_2^i {\vect K_2^i}^T,
\end{aligned}
\end{equation}
(equality up to scale) where $\vect K_1^i,\vect K_2^i$ are the unknown $2\times 2$ matrices of intrinsic parameters for $\vect A^i_1, \vect A^i_2$, and $\vect \Omega^*={\rm diag}(1,1,1,0)$. Geometrically, \eqref{eq:autocalibration} expresses the fact that the {\em dual of the image of the absolute conic} is a section of the {\em dual absolute quadric}. These relations are completely analogous to the self-calibration equations for pinhole cameras, so that any partial knowledge of the matrices $\vect K_1^i, \vect K_2^i$ can be used to impose constraints on $\vect Q \vect \Omega^* \vect Q^T$ and hence on $\vect Q$ (although, as for the pinhole case, we can actually only recover a ``similarity'' upgrade). For example, if the principal points are known to be at the origin (so $\vect K_1^i {\vect K_i^i}^T$ and $\vect K_2^i {\vect K_2^i}^T$ are diagonal), then \eqref{eq:autocalibration} gives four linear equations in the elements of $\vect Q \vect \Omega^* \vect Q^T$ corresponding to the zero entries of $\vect K_1^i {\vect K_1^i}^T$ and $\vect K_2^i {\vect K_2^i}^T$. A sufficient number of views allows us to estimate $\vect Q \vect \Omega^* \vect Q^T$, and from a singular value decomposition we can recover $\vect Q$ up to a similarity transformation. We refer to the supplementary material for some experiments with synthetic data.

 % it is possible to estimate $\vect Q$ linearly: each row in \eqref{eq:autocalibration} gives two linear equations and we have actually implemented and tested this scheme on synthetic data. 
 %Note that since \eqref{eq:autocalibration} is valid up to scale, the matrix $\vect Q$ actually only represents a ``similarity upgrade''. 
% We refer to the supplementary material for details.

\section{Discussion \label{sec:disc}}
\vspace{-.1cm}

In the first part of this presentation, we have described optical systems that can be associated with congruences of order one, 
and that record lines by measuring the coordinates of their intersection with some retinal plane. This setting is very general, but excludes
important families of imaging devices such as (non-central) {\em catadioptric cameras}, or cameras with {\em optical distortions}. In these examples, visual rays are {\em reflected} or {\em refracted} by specular surfaces or optical lenses, leading to maps that are often not rational (for example, they may involve square-roots). These cases could be handled by noting that mirrors or lenses act on a line congruence $L$ of (primary) visual rays by mapping it to a new congruence $L'$ of (secondary) rays. A completely general system consists of a sequence of such steps, followed a final map where rays are intersected with a retinal plane. Partial results in~\cite{ponce2016congruences} discuss the effect of reflecting a $(1,\beta)$-congruence off an algebraic surface, but an effective description of reflections and refractions in terms of line congruences is still missing. It will of course be of great interest to pursue this direction of research, and extend the approach proposed in this presentation to a completely general setting.

\vspace{-.2cm}

\paragraph{Acknowledgments.} 
This work was supported in part by the ERC grant VideoWorld, the Institut Universitaire de France, an Inria International Chair, the Inria-CMU associated team GAYA, ONR MURI N000141010934, the US National Science Foundation (DMS-1419018) and the Einstein Foundation Berlin.

{\small
\bibliographystyle{ieee}
\bibliography{CV,kriegs}
%\bibliography{CV,kriegs}
}

\newpage

\begin{appendices}

\counterwithin{lemma}{section}
\counterwithin{proposition}{section}
\counterwithin{theorem}{section}

This supplementary document contains some technical material not included in the main body of the paper, and presents the algorithms for SfM and self-calibration for two-slit cameras.

\section{Calculations with Pl\"ucker coordinates}
\label{sec:calc}

Let $\vect \pi$ be a plane in $\PP^3$. We consider a reference frame $(\pi)$ on $\vect \pi$ described by a $4\times 3$ matrix $\vect{Y}=[\vect{y}_1,\vect{y}_2,\vect{y}_3]$. The map $\vect N: \G \dashrightarrow \PP^2$ associating any line $\vect l$ not on $\vect \pi$ with the coordinates of the point $\vect y=\vect \pi \wedge \vect l$ for $(\pi)$ is described by the $3 \times 6$ matrix
\begin{equation}\label{eq:plane_projection_s}\small
\vect{N} = \qmatrix{
(\vect{y}_2\vee\vect{y}_3)^{*T}\\
(\vect{y}_3\vee\vect{y}_1)^{*T}\\
(\vect{y}_1\vee\vect{y}_2)^{*T}}.
\end{equation}
Indeed, this is the only linear map $\G \dashrightarrow \PP^2$ such that $\vect N (\vect y_1 \vee \vect z)=(1,0,0)^T$, $\vect N (\vect y_2 \vee \vect z)=(0,1,0)^T$, $\vect N (\vect y_3 \vee \vect z)=(0,0,1)^T$, $\vect N ((\vect y_1+\vect y_2 + \vect y_3) \vee \vect z)=(1,1,1)^T$ for all $\vect z$ not in $\vect \pi$. 

Let us now consider two ``slits'' $\vect l_1, \vect l_2$, that we represent using dual Pl\"ucker matrices $\vect P^*_1, \vect P^*_2$. The action of the corresponding essential camera $\vect x \mapsto \lambda_L(\vect x)$ can be written as
\begin{equation}\label{eq:essential_two_slit_s}\small
\vect x \mapsto \vect l=(\vect l_1 \vee \vect x) \wedge (\vect l_2 \vee \vect x)=  \vect P_1^* \vect x \vect x^T \vect P_2^* -  \vect P_2^*  \vect x \vect x^T \vect P_1^*,
\end{equation}
where $\vect l=\lambda_L(\vect x)$ is given as a dual Pl\"ucker matrix. Writing $\vect S_1, \vect S_1^*, \vect S_2, \vect S_2^*, \vect S_3, \vect S_3^*$ for the primal and dual Pl\"ucker matrices for $\vect y_2 \vee \vect y_3, \vect y_3 \vee \vect y_1, \vect y_1 \vee \vect y_2$ respectively, and $\vect L, \vect L^*$ for the primal and dual Pl\"ucker matrices of $\vect l=\lambda_L(\vect x)$, we have
\begin{equation}\small
\begin{aligned}
\lambda_L(\vect x) &\wedge \vect \pi = \vect Y \vect N \vect l =\vect Y \qmatrix{{\rm tr}(\vect S_1^*\vect L)\\ {\rm tr}(\vect S_2^* \vect L) \\ {\rm tr}(\vect S_3^* \vect L )}=
\vect Y \qmatrix{{\rm tr}(\vect S_1 \vect L^*)\\ {\rm tr}(\vect S_2 \vect L^*) \\ {\rm tr}(\vect S_3 \vect L^*)} \\
&= \vect Y \qmatrix{ {\rm tr}(\vect S_1 \vect P_2^* \vect x \vect x^T \vect P_1^* )\\ {\rm tr}(\vect S_2 \vect P_2^* \vect x \vect x^T \vect P_1^* )\\ {\rm tr}( \vect S_3 \vect P_2^* \vect x \vect x^T \vect P_1^*)}=\vect Y \qmatrix{ \vect x^T \vect P_1^* \vect S_1 \vect P_2^* \vect x \\ \vect x^T \vect P_1^* \vect S_2  \vect P_2^* \vect x\\ \vect x^T \vect P_1^* \vect S_3  \vect P_2^* \vect x},
\end{aligned}
\label{eq:plane_projection2_s}
\end{equation}
where equality is written up to scale, and we have used the fact that ${\rm tr}(\vect A \vect B)={\rm tr}(\vect B\vect A)={\rm tr}(\vect A^T \vect B^T)={\rm tr}(\vect B^T \vect A^T)$ for any matrices $\vect A, \vect B$. Hence, we recover the expression for a general two-slit camera, already noted in \cite{feldman2003epipolar, ZFPW03}:
\begin{equation}\label{eq:two_slit_non_intrinsic_s}\small
\vect x \mapsto \vect u =  \begin{bmatrix} \vect x^T \vect P_1^* \vect S_1 \vect P_2^* \vect x\\
\vect x^T \vect P_1^* \vect S_2 \vect P_2^* \vect x\\
\vect x^T \vect P_1^* \vect S_3 \vect P_2^* \vect x\\
\end{bmatrix}.
\end{equation}
If we choose an ``intrinsic'' reference frame, so that $\vect y_1 = \vect l_2 \wedge \pi$ and $\vect y_2 = \vect l_1 \wedge \pi$, or equivalently $\vect P_1^* \vect y_2 =\vect P_2^* \vect y_1=0$, the two-slit projection \eqref{eq:two_slit_non_intrinsic_s} reduces to
\begin{equation}\label{eq:two_slit_intrinsic_s}\small
\begin{aligned}
\vect x \mapsto \vect u &=  \begin{bmatrix} \vect x^T \vect P_1^* (\vect y_2 \vect y_3^T - \vect y_3 \vect y_2^T) \vect P_2^* \vect x\\
\vect x^T \vect P_1^*  (\vect y_3 \vect y_1^T - \vect y_1 \vect y_3^T) \vect P_2^* \vect x\\
\vect x^T \vect P_1^*  (\vect y_1 \vect y_2^T - \vect y_2 \vect y_1^T) \vect P_2^* \vect x\\
\end{bmatrix}\\
&=\begin{bmatrix} -\vect x^T \vect P_1^* \vect y_3 \vect y_2^T  \vect P_2^* \vect x\\
-\vect x^T \vect P_1^*  \vect y_1 \vect y_3^T  \vect P_2^* \vect x\\
\vect x^T \vect P_1^*   \vect y_1 \vect y_2^T \vect P_2^* \vect x\\
\end{bmatrix}\\
&=\qmatrix{
(\vect p_1^T \vect x)  \,\, (\vect q_2^T \vect x)\\
(\vect p_2^T \vect x) \,\, (\vect q_1^T \vect x) \\
(\vect p_2^T \vect x) \,\, (\vect q_2^T \vect x)},
\end{aligned}
\end{equation}
where $\vect p_1 = \vect P_1^* \vect y_3 = (\vect{l}_1 \vee \vect y_3)$, $\vect p_2 = -\vect P_1^* \vect y_1 =  -( \vect{l}_1 \vee \vect y_1)$, $\vect q_1 =  \vect P_2^* \vect y_3 =   (\vect{l}_2 \vee \vect y_3) $, $\vect q_2 = - \vect P_2^* \vect y_2 = -(\vect{l}_2 \vee \vect y_2) $.  Finally, combining \eqref{eq:essential_two_slit_s} and \eqref{eq:two_slit_intrinsic_s}, we also obtain an expression for the inverse line projection $\chi: \PP^2 \dashrightarrow \G$:
\begin{equation}\label{eq:inv_proj_s}\small
\begin{aligned}
&\vect u \mapsto  \lambda_L(\vect Y \vect u) = \vect P_1^* \vect Y \vect u \vect u^T  \vect Y^T \vect P_2^* -  \vect P_2^*  \vect Y \vect u \vect u^T  \vect Y^T \vect P_1^*\\
&=[-\vect p_2, \vect 0, \vect p_1]  \vect u \vect u^T  [\vect 0, - \vect q_2, \vect q_1] -   [\vect 0, - \vect q_2, \vect q_1] \vect u \vect u^T   [-\vect p_2, \vect 0, \vect p_1]\\
&=u_1 u_2 (\vect p_2 \wedge \vect q_2) - u_1 u_3 (\vect p_2 \wedge \vect q_1) - u_2 u_3 (\vect p_1 \wedge \vect q_2) + u_3^2(\vect p_1 \wedge \vect q_1).
\end{aligned}
\end{equation}

\section{Proofs}
\label{sec:proofs}

\begin{lemma} Let $\vect l_1, \vect l_2$ be two skew lines in $\PP^3$. For any point $\vect x$ not on the these lines, we indicate with $\lambda(\vect x)$ the unique transversal to ${\vect l_1,\vect l_2}$ passing through $\vect x$. If $\vect \pi$ and $\vect \pi'$ are two planes intersecting at a line $\vect \delta$ that meets $\vect l_1$ and $\vect l_2 $, then the map $f: \vect{\pi} \dashrightarrow \vect{\pi}'$ defined, for points $\vect y$ not on $\vect \delta$, as
\begin{equation}\label{eq:two_slit_homography_s}\small
f(\vect y)=\lambda(\vect y) \wedge \vect \pi',
\end{equation}
can be extended to a homography between $\vect \pi$ and $\vect \pi'$.
\end{lemma}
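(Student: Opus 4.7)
The key geometric observation is that $\vect \delta$ itself is a transversal to the two slits: being contained in both $\vect \pi$ and $\vect \pi'$ while meeting $\vect l_1$ and $\vect l_2$, the intersection points $\vect l_i \wedge \vect \pi$ and $\vect l_i \wedge \vect \pi'$ each coincide with $\vect l_i \wedge \vect \delta$. Set $\vect y_2 := \vect l_1 \wedge \vect \delta$ and $\vect y_1 := \vect l_2 \wedge \vect \delta$; then both planes admit compatible intrinsic frames $\vect Y = [\vect y_1, \vect y_2, \vect y_3]$ and $\vect Y' = [\vect y_1, \vect y_2, \vect y_3']$ sharing the first two basis points, where $\vect y_3 \in \vect \pi$ and $\vect y_3' \in \vect \pi'$ are chosen arbitrarily.

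The plan is to apply the intrinsic linear formula \eqref{eq: two_slit_projective} simultaneously to both retinas. The two resulting rational cameras $\psi, \psi': \PP^3 \dashrightarrow \PP^2$ share their ``denominator'' planes $\vect p_2 = -(\vect l_1 \vee \vect y_1) = \vect p_2'$ and $\vect q_2 = -(\vect l_2 \vee \vect y_2) = \vect q_2'$, since these depend only on the slits and on the common points $\vect y_1, \vect y_2$. Their ``numerator'' planes $\vect p_1 = \vect l_1 \vee \vect y_3$ and $\vect p_1' = \vect l_1 \vee \vect y_3'$ both lie in the pencil of planes through $\vect l_1$, hence $\vect p_1' = \alpha_1 \vect p_1 + \beta_1 \vect p_2$ for suitable scalars $\alpha_1, \beta_1$; analogously $\vect q_1' = \alpha_2 \vect q_1 + \beta_2 \vect q_2$.

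For any $\vect y = u_1 \vect y_1 + u_2 \vect y_2 + u_3 \vect y_3 \in \vect \pi \setminus \vect \delta$ the definition of $f$ gives $f(\vect y) = \psi'(\vect y)$ read in the $\vect Y'$-frame. Formula \eqref{eq: two_slit_projective} applied to $\psi$ at $\vect y$ yields $\vect p_1^T \vect y / \vect p_2^T \vect y = u_1/u_3$ and $\vect q_1^T \vect y / \vect q_2^T \vect y = u_2/u_3$; substituting the pencil decompositions of $\vect p_1', \vect q_1'$ into the analogous expression for $\psi'$ reduces, in homogeneous form, to
\[
f:\ (u_1 : u_2 : u_3)\ \longmapsto\ (\alpha_1 u_1 + \beta_1 u_3 \,:\, \alpha_2 u_2 + \beta_2 u_3 \,:\, u_3),
\]
a linear map of $\PP^2$ to itself. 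Since this formula is defined on all of $\vect \pi$ and is invertible whenever $\alpha_1 \alpha_2 \ne 0$, it extends $f$ to a homography $\vect \pi \to \vect \pi'$.

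The main obstacle is the non-degeneracy check $\alpha_1 \alpha_2 \ne 0$: these coefficients vanish exactly when $\vect y_3 \in \vect p_1'$ or $\vect y_3 \in \vect q_1'$, i.e., when $\vect \pi$ or $\vect \pi'$ contains a slit---an exceptional configuration excluded by the standing assumption that both retinal planes are in generic position with respect to the two-slit device. A minor technical point is keeping track of the implicit scalings of the planes $\vect p_i, \vect q_i$ in \eqref{eq: two_slit_projective}; the antisymmetry of the dual Pl\"ucker matrices (cf.~the supplementary material) ensures that the identifications $\vect p_1^T \vect y/\vect p_2^T \vect y = u_1/u_3$ and $\vect q_1^T \vect y/\vect q_2^T \vect y = u_2/u_3$ hold on the nose.
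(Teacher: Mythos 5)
Your proof is correct and rests on the same key idea as the paper's: choose intrinsic frames adapted to $\vect \delta$, with $\vect y_1 = \vect l_2 \wedge \vect \delta$ and $\vect y_2 = \vect l_1 \wedge \vect \delta$ shared by the two planes, and reduce $f$ to an explicit linear map in coordinates. The mechanics differ. The paper works with the \emph{inverse} line projection: it expands $\chi(\vect u)\wedge \vect\pi'$, observes that the quadratic term $u_1u_2(\vect p_2\wedge\vect q_2)$ vanishes because $\vect p_2\wedge\vect q_2=\vect\delta$ lies in $\vect\pi'$, cancels a factor $u_3$, and reads off that $f$ is the identity in the frame $[\vect y_1',\vect y_2',\vect y_3']$ it constructs on $\vect\pi'$. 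You instead compare the two \emph{forward} cameras $\psi,\psi'$ via the pencil decompositions $\vect p_1'=\alpha_1\vect p_1+\beta_1\vect p_2$ and $\vect q_1'=\alpha_2\vect q_1+\beta_2\vect q_2$, reaching the same affine-diagonal normal form; this buys an explicit description of how the extension depends on the free frame point $\vect y_3'$, at the cost of a nondegeneracy check the paper's route avoids. On that check, your characterization of when $\alpha_1\alpha_2=0$ is imprecise: one has $\alpha_1=0$ iff $\vect y_1$ lies on $\vect p_1'=\vect l_1\vee\vect y_3'$, equivalently iff $\vect y_3'$ lies on the plane $\vect l_1\vee\vect y_1$; but $(\vect l_1\vee\vect y_1)\cap\vect\pi'$ is exactly $\vect\delta$ whenever $\vect l_1\not\subset\vect\pi'$, so this is already excluded by requiring $\vect y_3'\notin\vect\delta$. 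Hence the invertibility is automatic under the lemma's hypotheses rather than an additional genericity assumption on the retinal planes --- so there is no gap, only a misstated justification. (The paper also sketches a second, synthetic argument --- a line $\vect m\subset\vect\pi$ maps into the conic $(\text{quadric of transversals to }\vect l_1,\vect l_2,\vect m)\cap\vect\pi'$, which splits off $\vect\delta$ and leaves a line --- which neither coordinate computation requires.)
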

\begin{proof} Let us fix a coordinate system $(\pi)$ on $\vect \pi$ given by $\vect Y=[\vect y_1, \vect y_2, \vect y_3]$. Up to composing with a projective transformation, we may assume that $\vect y_1 = \vect l_2 \wedge \vect \pi$ and $\vect y_2 = \vect l_1 \wedge \vect \pi$. It is also convenient to define $\vect p_1, \vect p_2, \vect q_1, \vect q_2$ as in the previous section, namely $\vect p_1 = (\vect{l}_1 \vee \vect y_3)$, $\vect p_2 =  -( \vect{l}_1 \vee \vect y_1)$, $\vect q_1 =  (\vect{l}_2 \vee \vect y_3) $, $\vect q_2 = -(\vect{l}_2 \vee \vect y_2) $. The map $\lambda(\vect y)$ can now be written as $\vect y=\vect Y \vect u \mapsto \chi(\vect u)$ where $\chi$ is given in \eqref{eq:inv_proj_s}. In particular, since $\vect \delta=\vect p_2 \wedge \vect q_2 = \vect y_1 \vee \vect y_2$ lies on $\vect \pi'$, we can describe $f(\vect y)$ as
\begin{equation}\small
\begin{aligned}
&\vect y = \vect Y \vect u \mapsto \chi(\vect u)  \wedge \vect \pi' \\
&=  - u_1 u_3 (\vect p_2 \wedge \vect q_1 \wedge \vect \pi' ) - u_2 u_3 (\vect p_1 \wedge \vect q_2 \wedge \vect \pi')+ u_3^2 (\vect p_1 \wedge \vect q_1 \wedge \vect \pi')\\
&= u_1 \vect y_1' + u_2  \vect y_2' + u_3 \vect y_3',
\end{aligned}
\end{equation}
where $\vect y_1'=-(\vect p_2 \wedge \vect q_1 \wedge \vect \pi' )$, $\vect y_2'=-(\vect p_1 \wedge \vect q_2 \wedge \vect \pi')$, $\vect y_3'=(\vect p_1 \wedge \vect q_1 \wedge \vect \pi')$. Fixing $\vect Y'=[\vect y_1', \vect y_2', \vect y_3']$ as a reference frame on $\vect \pi'$, the map \eqref{eq:two_slit_homography_s} corresponds to the identity on $\PP^2$. Hence, it can be extended to points $\vect y$ on $\vect \delta$ (where $u_3 =0$), and it is a homography.

We also give a sketch for a more ``geometric'' argument: we need to show that the a (generic) line $\vect m$ on $\vect \pi$ is mapped by \eqref{eq:two_slit_homography_s} to a line on $\vect \pi'$. If $\vect m$ does not intersect $\vect l_1$ or $\vect l_2$, the union of the common transversals to $\vect l_1,\vect l_2, \vect m$ (that are the lines in $\lambda_L(\vect m)$) is a quadric in $\PP^3$. The intersection of this quadric with $\vect \pi'$ will have degree two, however it contains the transversal line $\vect \delta$, and hence it is reducible. Since $\vect \delta$ does not belong to the image of \eqref{eq:two_slit_homography_s}, we deduce that (the closure of) image of $\vect m$ is a line in $\vect \pi'$.
\end{proof}

\medskip

\begin{proposition}\label{prop:calibration_finite_two_slit_s} If $\vect{A}_1$, $\vect{A}_2$ describe a parallel two-slit camera~\eqref{eq:affine_two_slit}, then we can uniquely write 
\begin{equation}\label{eq:two_slit_decomposition_full_s}\small
\vect{A}_1=\vect{K}_1\begin{bmatrix}
\vect{r}_1^T & t_1  \\ \vect{r}_3^T & t_3
\end{bmatrix}, \,\,
\vect{A}_2=\vect{K}_2\begin{bmatrix}
\vect{r}_2^T & t_2 \\ \vect{r}_3^T & t_4
\end{bmatrix},
\end{equation}
where $\vect{K}_1$ and $\vect{K}_2$ are upper-triangular $2\times 2$ matrices defined up to scale with positive elements along the diagonal, and  $\vect{r}_1, \vect{r}_2, \vect r_3$ are unit vectors, with $\vect r_3$ orthogonal to both $\vect r_1, \vect r_2$. Here, $\theta={\rm arccos}(\vect{r}_1 \cdot \vect{r}_2)$ is the {\em angle} between the slits, and $|t_4-t_3|$ is the {\em distance} between the slits. Moreover, if the matrices  $\vect{K}_1$ and $\vect{K}_2$ are written as
\begin{equation}\label{eq:calibration_s}\small
\vect{K}_1=\begin{bmatrix} f_u & u_0 \\ 0 & 1 \end{bmatrix}, 
\vect{K}_2=\begin{bmatrix} 2f_v & v_0 \\ 0 & 1 \end{bmatrix},
\end{equation}
then $f_u, f_v$ can be interpreted as ``magnifications'' in the $u$ and $v$ directions, and $(u_0,v_0)$ as the position of the ``principal point''.
\end{proposition}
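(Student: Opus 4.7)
The plan is to establish existence and uniqueness of the factorization via RQ decomposition, then derive the geometric and analytical interpretations of the intrinsic parameters. For existence and uniqueness, I would apply the RQ decomposition to the $2 \times 3$ matrix formed by the leftmost three columns of $\vect A_1$, and separately of $\vect A_2$. The key observation is that in an RQ factorization of a matrix with two rows, the second row of the orthonormal factor is just the normalization of the corresponding input row; hence both decompositions produce the same $\vect r_3 = \vect m_3/\|\vect m_3\|$ automatically, since $\vect A_1$ and $\vect A_2$ share $\vect m_3$ in their last row. Positivity of the diagonal entries of $\vect K_i$ removes the sign ambiguity, and the overall projective scale of each $\vect A_i$ accounts for the stated scale indeterminacy of $\vect K_i$. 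Once $\vect r_1, \vect r_2, \vect r_3$ are fixed as unit vectors, the translations $t_1, t_2, t_3, t_4$ are obtained by applying $\vect K_i^{-1}$ to the last columns of $\vect A_1, \vect A_2$.

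To identify $\theta$ as the angle between the slits and $|t_4 - t_3|$ as their distance, I would note that the slit $\vect l_i$ is the null space of $\vect A_i$, which coincides with the null space of the canonical-form matrix appearing on the right of \eqref{eq:two_slit_decomposition_full_s} (since $\vect K_i$ is invertible). Each slit therefore lies inside an affine plane with normal $\vect r_3$ at signed distance $-t_3$ (respectively $-t_4$) from the origin, so the two parallel planes containing the slits are separated by $|t_4 - t_3|$. Moreover, the direction of each slit is $\vect r_i \times \vect r_3$, which is orthogonal to $\vect r_3$; thus the common perpendicular between the two skew slits is parallel to $\vect r_3$, and its length equals $|t_4 - t_3|$. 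The angle between slit directions follows from the identity $(\vect r_1 \times \vect r_3) \cdot (\vect r_2 \times \vect r_3) = \vect r_1 \cdot \vect r_2$ (valid because $\vect r_1, \vect r_2 \perp \vect r_3$ and $\vect r_3$ is a unit vector), which yields $\theta = \arccos(\vect r_1 \cdot \vect r_2)$.

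For the analytical intrinsics, I would substitute the canonical pair \eqref{eq:matrices_finite_two_slit} into $(\vect K_1 \vect A_1, \vect K_2 \vect A_2)$ and read off the resulting rational map using \eqref{eq: two_slit_projective}. A direct calculation shows that the image point takes the form $\bigl(f_u (u_1/u_3) + u_0,\ 2 f_v (u_2/u_3) + v_0,\ 1\bigr)^T$, where $u_1/u_3, u_2/u_3$ are the image coordinates of the canonical camera. This identifies $f_u$ and $f_v$ as multiplicative magnifications along the two image axes and $(u_0, v_0)$ as the principal point; the factor of $2$ inside $\vect K_2$ is absorbed so that $f_v$ measures magnification relative to the canonical $\vect A_2$, whose first row already has norm~$2$.

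The main step to watch is that the two independent RQ decompositions of $\vect A_1$ and $\vect A_2$ really produce the same unit vector $\vect r_3$ with the same sign, so that the factorization genuinely has the form \eqref{eq:two_slit_decomposition_full_s} with a single common third row. This turns out to be automatic once the positivity convention on the diagonal of $\vect K_1$ is fixed and transferred to $\vect K_2$, so the whole argument collapses to invoking RQ uniqueness and expanding two short calculations.
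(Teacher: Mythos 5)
Your argument is correct and follows essentially the same route as the paper's: existence and uniqueness via the RQ decomposition of the $2\times 3$ left blocks (with the shared second row forcing a common $\vect r_3$), and the reading of $f_u, f_v, u_0, v_0$ by composing the canonical camera with $\vect K_1, \vect K_2$ and inspecting the resulting rational map. The only place you diverge is in identifying $\theta$ and $|t_4-t_3|$: the paper works forward, computing the euclidean orbit \eqref{eq:two_slit_euclidean_s} of the canonical family \eqref{eq:matrices_finite_two_slit_s} and observing that it decomposes with $\vect K_1 = \mathrm{Id}$, $\vect K_2 = \mathrm{diag}(2,1)$, whereas you work backward from the factored form, reading the slits off as null spaces lying in parallel planes with normal $\vect r_3$ and using $(\vect r_1\times\vect r_3)\cdot(\vect r_2\times\vect r_3)=\vect r_1\cdot\vect r_2$; your version is slightly more self-contained and makes explicit why the common perpendicular of the slits is along $\vect r_3$, while the paper's version simultaneously exhibits the euclidean orbit, which it reuses later. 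Both are complete; no gaps.
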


\begin{proof} The decomposition exists and is unique because of RQ-decomposition of matrices \cite[Theorem 5.2.3]{golub2012matrix}. More precisely, if we write $\vect A_1 = [\vect M_1 \, | \, \vect t_1]$, $\vect A_2 = [\vect M_2 \, | \, \vect t_2]$, where $\vect M_1, \vect M_2$ are $2\times 3$, then $\vect K_1$, $\vect K_2$ are the (normalized) upper triangular matrices in the RQ decomposition for $\vect M_1$, $\vect M_2$ respectively. 

We next observe that for a pair canonical matrices
\begin{equation}\label{eq:matrices_finite_two_slit_s}\small
\vect A_1 = \begin{bmatrix}
1 &0 &0&0  \\ 0 & 0 & 1 & 0 
\end{bmatrix},
\,\, \vect A_2 = \begin{bmatrix}
2\cos \theta & 2\sin \theta & 0 & 0   \\ 0 & 0 & 1 & d
\end{bmatrix},
\end{equation}
the corresponding euclidean orbit is of the form
\begin{equation}\label{eq:two_slit_euclidean_s}\small
\begin{bmatrix}
\vect{r}_1^T & t_1  \\ \vect{r}_3^T & t_3
\end{bmatrix}, \,\,
\begin{bmatrix}
2\vect{r}_2^T & 2t_2 \\ \vect{r}_3^T & t_3+d
\end{bmatrix},
\end{equation}
where $\theta={\rm arccos}(\vect{r}_1 \cdot \vect{r}_2)$. This follows by applying a $4\times4$ euclidean transformation matrix to \eqref{eq:matrices_finite_two_slit_s}. These cameras decompose with $\vect K_1$ being the identity and $\vect K_2={\rm diag(2,1)}$.

Finally, if we indicate with $\vect p_1, \vect p_2$ and $2 \vect q_1, \vect q_2$ the rows of \eqref{eq:two_slit_euclidean_s}, so that the corresponding camera can be written as $\vect x \mapsto \vect u = (\vect p_1^T \vect x / \vect p_2^T \vect x, 2 \vect q_1^T \vect x / \vect q_2^T \vect x , 1)$, then the composition of $\begin{bmatrix} \vect p_1^T \\ \vect p_2^T \end{bmatrix}$, $\begin{bmatrix} \vect q_1^T \\ \vect q_2^T \end{bmatrix}$ with $\vect K_1, \vect K_2$ as in \eqref{eq:calibration_s} yields the camera
\begin{equation}\small
\vect x \mapsto \left( f_u \frac{\vect p_1^T \vect x}{\vect p_2^T \vect x} + u_0, f_v \frac{2 \vect q_1^T \vect x}{\vect q_2^T \vect x}+ v_0, 1\right)^T.
\end{equation}
From this we easily deduce the physical interpretations of the entries of $\vect K_1$ and $\vect K_2$.
\end{proof}

We point out that a decomposition with calibration matrices is actually possible for generic finite two-slits (not necessarily ``parallel''), if we allow for non triangular matrices $\vect K_1, \vect K_2$. Indeed, the four rows of $\vect M_1, \vect M_2$ will intersect in a linear space of dimension one $\langle \vect r \rangle$, and the second rows of $\vect K_1, \vect K_2$ can describe how to obtain $\vect r$ from $\vect M_1, \vect M_2$. Imposing that the diagonal elements of $\vect K_1, \vect K_2$ are positive, the decomposition is unique, and there are now $6+2$ (``analytic'' and ``3D'') intrinsic, and $6$ extrinsic parameters, summing up to $14$ degrees of freedom of our projective two-slit camera model. On the other hand, the action of general calibration matrices is {\em not} a linear change of image coordinates, and requires changing retinal plane (in fact, we must switch to a ``parallel plane'' for the two slits).

\begin{proposition} Let $\vect A_1, \vect A_2$ define a pushbroom camera
\begin{equation}\label{eq:pushbroom_standard_s}\small
\vect A_1=\begin{bmatrix}
\vect{m}_1^T & t_1  \\ \vect{0} & 1 
\end{bmatrix}, \,\,
\vect A_2 =\begin{bmatrix}
\vect{m}_2^T & t_2 \\ \vect{m}_3^T & t_3   
\end{bmatrix},
\end{equation}
such that that $\vect m_1$ and $\vect m_3$ are orthogonal. We can uniquely write
\begin{equation}\small
\vect A_1 = \vect{K}_1 \begin{bmatrix}
\vect{r}_1^T & t_1 \\  \vect{0} & 1
\end{bmatrix},
\vect A_2 = \vect{K}_2 \begin{bmatrix}
\vect{r}_2^T & t_2   \\ \vect{r}_3^T & t_3
\end{bmatrix},
\end{equation}
where $\vect{K}_1={\rm diag}(1/v,1)$, $\vect{K}_2=\begin{bmatrix} f & u \\ 0 & 1 \end{bmatrix}$ (with positive $v $ and $\alpha$) and $\vect{r}_1, \vect{r}_2, \vect r_3$ are unit vectors, with $\vect r_3$ orthogonal to both $\vect r_1, \vect r_2$. Here, $\theta={\rm arccos}(\vect{r}_1 \cdot \vect{r}_2)$ is the {\em angle} between the two slits (or between the direction of motion of the sensor and the parallel scanning planes). Moreover, $v$ can be interpreted as the speed of the sensor, and $f$ and $u$ as the magnification and the principal point of the 1D projection.
\end{proposition}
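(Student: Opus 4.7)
The plan is to mirror the proof of Proposition~\ref{prop:calibration_finite_two_slit_s}, adapting each step to the pushbroom normal form. First I would establish existence and uniqueness of the decomposition via RQ-type factorizations applied separately to $\vect{A}_1$ and $\vect{A}_2$. For $\vect{A}_1$, the fact that its second row is already $[\vect{0},1]$, combined with the requirement that $\vect{K}_1$ be diagonal with $(\vect{K}_1)_{22}=1$, forces $\vect{K}_1=\text{diag}(1/v,1)$ with $v=1/\|\vect{m}_1\|$, and hence $\vect{r}_1=\vect{m}_1/\|\vect{m}_1\|$ together with the corresponding rescaled translation. For $\vect{A}_2$, apply the standard RQ factorization to the $2\times 3$ linear block with rows $\vect{m}_2^T,\vect{m}_3^T$, using the bottom-up convention in which $\vect{r}_3=\vect{m}_3/\|\vect{m}_3\|$ is normalized first and $\vect{r}_2$ is then obtained from $\vect{m}_2$ by Gram--Schmidt against $\vect{r}_3$; positivity of the diagonal of $\vect{K}_2$ and a projective rescaling fixing $(\vect{K}_2)_{22}=1$ pin the factorization down uniquely. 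The required relation $\vect{r}_1\perp\vect{r}_3$ is then immediate from the hypothesis $\vect{m}_1\cdot\vect{m}_3=0$, because the bottom-up convention makes $\vect{r}_3$ a positive multiple of $\vect{m}_3$ and $\vect{r}_1$ a positive multiple of $\vect{m}_1$; the relation $\vect{r}_2\perp\vect{r}_3$ is automatic.

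Next I would check that the canonical family
\[
\vect{A}_1=\qmatrix{\sin\theta & \cos\theta & 0 & 0\\ 0 & 0 & 0 & 1}, \quad \vect{A}_2=\qmatrix{0 & 1 & 0 & 0\\ 0 & 0 & 1 & 0}
\]
satisfies the decomposition with $\vect{K}_1=\vect{K}_2=\vect{I}$, $\vect{r}_1=(\sin\theta,\cos\theta,0)^T$, $\vect{r}_2=(0,1,0)^T$, $\vect{r}_3=(0,0,1)^T$, so that $\vect{r}_1\cdot\vect{r}_2=\cos\theta$. Right-multiplying $(\vect{A}_1,\vect{A}_2)$ by any $\vect{H}\in SE(3)$ leaves the second row $[\vect{0},1]$ of $\vect{A}_1$ invariant and simply rotates the $\vect{r}_i$'s by the associated rotation, preserving all orthogonalities and the inner product $\vect{r}_1\cdot\vect{r}_2=\cos\theta$; conversely any camera with $\vect{K}_1=\vect{K}_2=\vect{I}$ is pulled back to the canonical form by an inverse euclidean motion. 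Hence $\theta=\arccos(\vect{r}_1\cdot\vect{r}_2)$ is a euclidean invariant that coincides with the geometric angle between the two slits.

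Finally, applying general $\vect{K}_1=\text{diag}(1/v,1)$ and $\vect{K}_2=\qmatrix{f & u\\ 0 & 1}$ to the decomposition yields the rational camera
\[
\vect{x}\mapsto\Bigl(\tfrac{1}{v}[\vect{m}_1^T,t_1]\vect{x},\;f\tfrac{[\vect{m}_2^T,t_2]\vect{x}}{[\vect{m}_3^T,t_3]\vect{x}}+u,\;1\Bigr)^T,
\]
from which the interpretations of $v$ as sensor speed along the trajectory, and of $f$ and $u$ as the magnification and principal point of the 1D cross-track projection, are read off directly. The only genuinely non-routine step is the orthogonality verification; without $\vect{m}_1\perp\vect{m}_3$ the ``$\vect{K}_1$'' factor would need an off-diagonal entry absorbing the component of $\vect{m}_3$ along $\vect{m}_1$, and the clean physical interpretation of the intrinsic parameters would be lost---this is precisely the technical difficulty mentioned in the footnote of the main text.
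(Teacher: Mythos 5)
Your proof is correct and follows essentially the same route as the paper's own argument: uniqueness via an RQ/QR-type factorization of each $2\times 4$ matrix, identification of the euclidean orbit of the canonical family (which decomposes with identity calibration matrices), and the physical interpretation obtained by composing with general $\vect K_1,\vect K_2$. You actually supply more detail than the paper on the one non-routine point --- that the bottom-up normalization makes $\vect r_1,\vect r_3$ positive multiples of $\vect m_1,\vect m_3$, so the hypothesis $\vect m_1\perp\vect m_3$ yields $\vect r_1\perp\vect r_3$ --- which the paper's proof leaves implicit.
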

\begin{proof} The proof is similar to that of Proposition \ref{prop:calibration_finite_two_slit_s}. The decomposition is unique because of QR-factorization of matrices. %Note that given the freedom in scale for $\vect A_2$, we may assume that $\vect{r}_1,\vect{r}_2,\vect r_3$ are positively oriented (this was not the case for finite two-slits). 
The euclidean orbits of ``canonical'' pushbroom cameras have the form
\begin{equation}\label{eq:pushbroom_canonical_s}\small
 \begin{bmatrix}
\vect{r}_1^T & t_1 \\  \vect{0} & 1
\end{bmatrix},
\begin{bmatrix}
\vect{r}_2^T & t_2   \\ \vect{r}_3^T & t_3
\end{bmatrix},
\end{equation}
All these cameras decompose with $\vect K_1, \vect K_2$ being the identity. Finally, the physical interpretation of the parameters follows by noting that composing a pushbroom camera (with rows $\vect p_1^T, (0,0,0,1)^T$ and $\vect q_1^T, \vect q_2^T$) with calibration matrices $\vect K_1, \vect K_2$ yields
\begin{equation}\label{eq:change_coordinates_pushbroom_s}\small
\vect x \mapsto \left( \frac 1 v \vect p_1^T \vect x, f \frac{\vect q_1^T\vect x}{\vect q_2^T \vect x} + u, 1 \right)^T.
\end{equation}
\end{proof}

Similarly to the case of finite slits, the decomposition based on calibration matrices can be extended to the case of arbitrary pushbroom cameras, by allowing for $\vect K_2$ to be a general $2\times 2$ matrix with positive entries along the diagonal. This gives a total of $4+1+6 = 11$ free parameters, which agrees with the degrees of freedom of our affine pushbroom model. However, a non-upper triangular matrix $\vect K_2$ does not correspond to a linear change of image coordinates as in \eqref{eq:change_coordinates_pushbroom_s}, but requires changing retinal plane.

\begin{theorem}\label{thm:epipolar_s} Let $({\vect A}_1,{\vect A}_2)$, $({\vect B}_1,{\vect B}_2)$ be two 
general projective two-slit cameras. The set of corresponding image points ${\vect u}$, ${\vect u'}$ in $\PP^2$ is characterized by the following relation:
{\small
\begin{equation} \label{eq:linF_s}\small
\sum_{ijkl} {f}_{ijkl} \, \begin{bmatrix} u_1 \\ u_3 \end{bmatrix}_i \begin{bmatrix} u_2 \\ u_3 \end{bmatrix}_j \begin{bmatrix} u'_1 \\ u'_3 \end{bmatrix}_k\begin{bmatrix} u'_2 \\ u'_3 \end{bmatrix}_l = 0,
\end{equation}}
where $\vect F=(f_{ijkl})$ is a $2 \times 2 \times 2 \times 2$ ``epipolar tensor''. Its entries are
 \begin{equation}\label{eq:entries_epipolar_tensor_s}\small
f_{ijkl}\,= \,(-1)^{i+j+k+l} \cdot 
\det \begin{bmatrix} ({\vect A}_1)_{3-i}^T \!\! & \!\! ({\vect A}_2)_{3-j}^T 
\!\!&\!\!  ({\vect B}_1)_{3-k}^T  
\!\!&\!\!  ({\vect B}_2)_{3-l}^T  \end{bmatrix}.
\end{equation}
%where $(i,\hat i), (j,\hat j), (k,\hat k), (l, \hat l)$ are pairs of distinct indices. 
Up to projective transformations of $\PP^3$ there are {\em two}
configurations $({\vect A}_1,{\vect A}_2), ({\vect B}_1,{\vect B}_2)$ 
compatible with a given epipolar tensor.
\end{theorem}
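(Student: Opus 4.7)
The plan is to derive the epipolar equation from a direct incidence argument on back-projected rays, and then to reduce the reconstruction statement to the classical principal-minor assignment problem for $4\times 4$ matrices.

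For the first part, I would use the ``intrinsic'' projective model~\eqref{eq: two_slit_projective} to observe that the back-projection of an image point $\vect u$ under a two-slit camera $(\vect A_1, \vect A_2)$ is the line in $\PP^3$ cut out by the two hyperplanes with coefficient vectors $u_1 (\vect A_1)_2 - u_3 (\vect A_1)_1$ and $u_2 (\vect A_2)_2 - u_3 (\vect A_2)_1$. Two image points $\vect u$, $\vect u'$ correspond iff their back-projected rays meet in a common point of $\PP^3$, i.e., iff the $4\times 4$ matrix whose rows are the four resulting hyperplane vectors is singular. Expanding its determinant by multilinearity, using that each such row decomposes as $\sum_i (-1)^{i+1} [u_1, u_3]_i (\vect A_1)_{3-i}$ (and analogously for the other three rows), yields the bilinear form~\eqref{eq:linF}; the four alternating signs combine into $(-1)^{(i+1)+(j+1)+(k+1)+(l+1)} = (-1)^{i+j+k+l}$, exactly matching~\eqref{eq:entries_epipolar_tensor}.

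For the reconstruction statement, I would use projective freedom on $\PP^3$ to normalize the four (generically independent) first rows $(\vect A_1)_1, (\vect A_2)_1, (\vect B_1)_1, (\vect B_2)_1$ to the standard basis covectors $e_1^T, \ldots, e_4^T$, and collect the remaining four second rows into a $4\times 4$ matrix $\vect C = (c_{ij})$. A direct Laplace expansion then shows that $f_{ijkl} = (-1)^{|S|} \det \vect C_{SS}$, where $S \subseteq \{1,\ldots,4\}$ is the set of positions of $(i,j,k,l)$ equal to $1$; that is, the $f_{ijkl}$ are precisely the principal minors of $-\vect C$. The only projective transformations of $\PP^3$ preserving this normalization are the diagonal ones, and they act on $\vect C$ by conjugation $\vect C \mapsto \vect D^{-1} \vect C \vect D$. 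The theorem then follows from the classification of~\cite{lin2009polynomial}: up to diagonal conjugation, the $4\times 4$ matrices realizing a given collection of principal minors form exactly two orbits, represented by $\vect C$ and $\vect C^T$, which correspond to the two distinct projective camera configurations in the statement.

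The main obstacle is the second step. The derivation of~\eqref{eq:linF} is essentially mechanical linear algebra, but the uniqueness-up-to-two relies critically on the non-trivial principal-minor assignment result from~\cite{lin2009polynomial}. One additionally needs to verify that for generic $\vect C$ the orbits $[\vect C]$ and $[\vect C^T]$ yield genuinely non-equivalent projective camera configurations --- equivalently, that a generic $4\times 4$ matrix is not diagonally conjugate to its own transpose --- which is a short dimension count.
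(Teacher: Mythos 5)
Your proposal is correct and follows essentially the same route as the paper: the incidence condition on back-projected rays (your two-hyperplane formulation is just the paper's inverse line projection $\chi(\vect u)$ written out), multilinear expansion to get the tensor entries, normalization of the four first rows to the standard basis so that the $f_{ijkl}$ become signed principal minors of the matrix $\vect C$ of second rows, and the appeal to the principal-minor assignment result of~\cite{lin2009polynomial} yielding the orbits $\vect D^{-1}\vect C\vect D$ and $\vect D^{-1}\vect C^T\vect D$. Your added remark that one must check the two orbits are generically distinct is also how the paper handles it (by exhibiting an explicit example rather than a dimension count).
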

\begin{proof} The inverse line projection \eqref{eq:inv_proj_s} can be written as
\begin{equation}\label{eq:inv_proj_tensor_s} \small
\chi(\vect u) = \sum_{ij} (-1)^{i+j}(\vect A_1)_{3-i} \wedge (\vect A_2)_{3-j} \begin{bmatrix} u_1 \\ u_3 \end{bmatrix}_i \begin{bmatrix} u_2 \\ u_3 \end{bmatrix}_j.
\end{equation}
The definition of $\vect F$ is simply the condition that $\chi(\vect u)$ and $\chi(\vect u')$ as in \eqref{eq:inv_proj_tensor_s} are concurrent (see also \cite{ponce2016congruences}). Up a global scale factor, the elements of $\vect F$ do not depend on the scaling of the $2 \times 4$ matrices, and are fixed by projective transformations of $\PP^3$. Hence, assuming that the vectors $(\vect A_1)_1,(\vect A_2)_1,(\vect B_1)_1,(\vect B_2)_1$ are independent (which is true generically) we can apply a change of reference frame in $\PP^3$ so that the projection matrices have the form
\begin{equation}\small
\label{eq:vecrep_s}
\begin{array}{l}
{\vect A}_1=\begin{bmatrix} 1 & 0 & 0 & 0 \\ c_{11} & c_{12} & c_{13} & c_{14} \end{bmatrix} ,\,\,
{\vect A}_2=\begin{bmatrix} 0 & 1 & 0 & 0 \\ c_{21} & c_{22} & c_{23} & c_{24} \end{bmatrix},\\
{\vect B}_1=\begin{bmatrix} 0 & 0 & 1 & 0 \\ c_{31} & c_{32} & c_{33} & c_{34} \end{bmatrix}. \,\,
{\vect B}_2=\begin{bmatrix} 0 & 0 & 0 & 1 \\ c_{41} & c_{42} & c_{43} & c_{44} \end{bmatrix}.
\end{array}
\end{equation}
The $16$ entries of $\vect F$ are now (up to sign) the {\em principal minors} of the $4 {\times} 4$-matrix $\vect C = (c_{ij})$: more precisely, $f_{ijkl}=(-1)^{i+j+k+l} \det \vect C_{[i-1,j-1,k-1,l-1]}$ where $\vect C_{[i-1,j-1,k-1,l-1]}$ is the submatrix of $\vect C$ where the selected rows and columns correspond to the binary vector $[i-1,j-1,k-1,l-1]$ (for example, $\vect C_{[1,0,0,0]}=(c_{11})$). Determining valid projection matrices $({\vect A}_1,{\vect A}_2)$, $({\vect B}_1,{\vect B}_2)$ given the tensor $\vect F$, is equivalent to finding the entries of the $4\times 4$-matrix $\vect C$ given its principal minors. This problem is studied in~\cite{lin2009polynomial}. Under generic conditions, the set of all matrices with the same principal minors as $\vect C$ have the form $\vect D^{-1} \vect C \vect D$ or $\vect D^{-1} \vect C^T \vect D$, where $\vect D$ is a diagonal matrix \cite{lin2009polynomial}. Each of these two families of matrices is a projective configuration of cameras, and the two configurations are in general distinct (see the discussion in the next section).
\end{proof}

\section{Algorithms}
\label{sec:alg}

\subsection{Linear SfM} We assume that we are given pairs of corresponding image points $(\vect u_i, \vect u_i')$, $i=1,\ldots, n$, for two unknown two-slit cameras. Each pair yields a linear constraint on the epipolar tensor $\vect F$ in \eqref{eq:linF_s}. Hence, if $n\ge 15$ correspondences are given, we can compute a linear estimate for $\vect F$. For noisy data, this estimate will not be a valid epipolar tensor, since tensors of the form \eqref{eq:linF_s} are not generic. However, it is possible to recover projection matrices from only $13$ of the entries of $\vect F$, which avoids the problem of using a valid tensor. A simple scheme for this is as follows:
\begin{enumerate}
\item We set out to recover the entries of a $4\times 4$-matrix $\vect C$ given its principal minors. Since we can always replace $\vect C$ with $\vect D^{-1} \vect C \vect D$, where $\vect D$ is a diagonal matrix, we can assume that $c_{12}=c_{13}=c_{13}=c_{14}=1$ (at least generically). Other similar assignments are possible.
\item Elements on the diagonal and on the first column of $\vect C$ are easily computed given (seven of the entries of) $\vect F$:
\begin{itemize}
\item  $c_{11}=-f_{1222}$; $c_{22}=-f_{2122}$; $c_{33}=-f_{2212}$; $c_{44}=-f_{2221}$.
\item $c_{21}=(c_{11}c_{22}-f_{1122})/c_{12}$; $c_{31}=(c_{11}c_{33}-f_{1212})/c_{13}$; $c_{41}=(c_{11}c_{44}-f_{1221})/c_{14}$.
\end{itemize}
Here the elements to the right of the equal signs have already been assigned. Hence, we recover $10$ entries of $\vect C$ from linear equalities.
\item The remaining six entries of $\vect C$ are pairwise constrained by six elements of $\vect F$. For example, using the minors $f_{2112}$, $f_{1112}$ (corresponding to rows/columns $2,3$ and $1,2,3$ of $\vect C$) we deduce that $c_{32}$ must satisfy $ac_{32}^2 + bc_{32} + c=0$ where
\begin{equation}\small
\begin{aligned}
&a=c_{13}c_{21}\\
&b=f_{1112}+c_{11}f_{2112}-c_{13}c_{31}c_{22}-c_{12}c_{21}c_{33}\\
&c=c_{12}c_{31}c_{22}c_{33}-c_{12}c_{31}f_{2112},
\end{aligned}
\end{equation}
and that $c_{23}=(c_{22}c_{33}-f_{2112})/c_{32}$. Similar relations hold for the pairs $c_{24},c_{42}$ and $c_{34},c_{43}$. This leads to $8$ possible matrices $\vect C$, \ie, a finite number of camera configurations. Note however that the entries $f_{1111}$ and $f_{2111}$ of $\vect F$ were never used (which is why we can assume the tensor to be generic): in an ideal setting with no noise, exactly two of the $8$ solutions will be consistent with the remaining constraints.
\end{enumerate}

This approach for recovering two-slit projections from the corresponding epipolar tensor relies on some genericity assumptions (\eg, we have often divided by element without verifying that it is not zero), and developing an optimal strategy for this task is outside the scope of our work. Nevertheless, we include as a proof of concept some results.

\paragraph{Experiments.} We present a concrete example illustrating some basic properties of the fundamental tensor. We consider the following pairs of projection matrices:
\begin{equation}\label{eq:example_two_slit_s}\small
\begin{aligned}
&\vect A_1 =\qmatrix{ -1 &  7 &  4  & 0 \\ 8 & -1 & 13 &  4} , \vect A_2 =\qmatrix{11 &  6 & -2 &  4 \\ 8& -1& 13& -5}\\
&\vect B_1 = \qmatrix{14 &  9 & -3 &  8 \\ 0 &  0 & 0 &  1}, \vect B_2 =\qmatrix{-3 &  8 & 10 & 3 \\ 6 & 13 & 5 & 13}
\end{aligned}
\end{equation}
The pair $\vect A_1, \vect A_2$ represents a parallel finite two-slit camera, while $\vect B_1, \vect B_2$ is a pushbroom camera. The associated epipolar tensor \eqref{eq:linF_s} is
\begin{equation}\label{eq:example_tensor_s}\small
\begin{aligned}
\vect F = &\qmatrix{  0 &     0\\ 21816 & -25650} \qmatrix{1906 &  -2090 \\ -3642 & 5510}\\
      &\qmatrix{ 880 & 475\\18600 &-11875} \qmatrix{97 & -380 \\ -1259 & 1425 },
 \end{aligned}
\end{equation}
where each $4\times 4$ matrix represents a block $(f_{ijkl})_{kl}$ for fixed $i,j$. Note that $f_{1111}$ and $f_{1112}$ are zero, since the second rows of $\vect A_1, \vect A_2, \vect B_1$ are linearly dependent. Using the approach outlined above, we can use this tensor to recover two matrices $\vect C_1, \vect C_2$ whose principal minors are the entries of $\vect F$ (we must normalize $\vect F$ so that $f_{2222}=1$). We use these matrices to construct two pairs of two-silt cameras, namely
\begin{equation}\label{eq:example_canonical_s}\small
\begin{aligned}
&\vect A_1^1 =\qmatrix{ 1. &    0 & 0 &   0   \\  -3.87 &  1.    &  1.   &   1.        }\\
 &\vect A_2^1=\qmatrix{  0.     &   1.    &   0.     &   0.     \\ -14.22 &  8.33 & -6.67 & -22.17},\\
&\vect B_1^1 = \qmatrix{  0.   &   0.    &   1.    &   0.   \\  0.44 &  -0.28 &   0.27 &   1.14}\\
&\vect B_2^1 =\qmatrix{0.    &  0.     &  0.      &   1. \\ -0.86 &  0.26 &   0.15 &   0.88},
\end{aligned}
\end{equation}
and
\begin{equation}\small
\begin{aligned}
&\vect A_1^2 =\qmatrix{ 1. &    0 & 0 &   0   \\  -3.87 &  1.    &  1.   &   1.        }\\
 &\vect A_2^2=\qmatrix{  0.     &   1.    &   0.     &   0.     \\ -14.22 &  8.33 & 9.25 &  4.24},\\
&\vect B_1^2 = \qmatrix{  0.   &   0.    &   1.    &   0.   \\  0.44 &  0.20 &   0.27 &  -0.07}\\
&\vect B_2^2 =\qmatrix{0.    &  0.     &  0.      &   1. \\ -0.86 &  -1.34 &  -2.26 &   0.88}.
\end{aligned}
\end{equation}
Computing the epipolar tensor  \eqref{eq:linF_s} for both of these pairs yields $\vect F$ as in \eqref{eq:example_tensor_s}. On the other hand, the two camera configurations are {\em not} projectively equivalent: indeed, if a projective transformation between the two existed, it would need to be the identity, because five of the eight rows coincide. It is straightforward to verify that it is in fact the second pair that corresponds to the configuration of the original cameras \eqref{eq:example_two_slit_s}.

We now try to recover the same cameras using image correspondences. We consider $70$ random points in space, project them using \eqref{eq:example_two_slit_s}, and add some noise to the images.  In this case, none of original the eight solutions will be exactly consistent with the last two entries of $\vect F$, however we can consider the two solutions that minimize an ``algebraic residual'' for these constraints. For image sizes of about $100 \times 100$, and noise with a standard deviation of $10^{-5}$, we recover the following pairs of cameras (that should be compared with \eqref{eq:example_canonical_s}):
\begin{equation}\small
\begin{aligned}
&\vect A_1^1 =\qmatrix{ 1. &    0 & 0 &   0   \\  -3.97 &  1.    &  1.   &   1.        }\\
 &\vect A_2^1=\qmatrix{  0.     &   1.    &   0.     &   0.     \\ -15.26 &  8.44 & -7.60 & -23.18},\\
&\vect B_1^1 = \qmatrix{  0.   &   0.    &   1.    &   0.   \\  0.42 &  -0.25 &   0.27 &   1.17}\\
&\vect B_2^1 =\qmatrix{0.    &  0.     &  0.      &   1. \\ -0.86 &  0.25 &   0.14 &   0.88},
\end{aligned}
\end{equation}
and
\begin{equation}\small
\begin{aligned}
&\vect A_1^2 =\qmatrix{ 1. &    0 & 0 &   0   \\  -3.97 &  1.    &  1.   &   1.        }\\
 &\vect A_2^2=\qmatrix{  0.     &   1.    &   0.     &   0.     \\ -15.26 &  8.44 & 9.36 &  4.41}
 \end{aligned}
 \end{equation}
 \begin{equation}\small
\begin{aligned}
&\vect B_1^2 = \qmatrix{  0.   &   0.    &   1.    &   0.   \\  0.42 &  0.20 &   0.27 &  -0.07}\\
&\vect B_2^2 =\qmatrix{0.    &  0.     &  0.      &   1. \\ -0.86 &  -1.30 &  -2.42 &   0.88}.
\end{aligned}
\end{equation}

%The average reprojection errors for the two configurations are $1.12$ and $3.05$.

\subsection{Minimal SfM}  A non-linear ``minimal'' approach for estimating the epipolar tensor requires $13$ corresponding image points. Substituting these correspondences in \eqref{eq:linF_s}, we obtain an under-determined linear system, which implies that the epipolar tensor is a linear combination $\alpha T_1 + \beta T_2 + \gamma T_3$ for some $T_1, T_2, T_3$ that generate the corresponding null-space. Since the variety of epipolar tensors has codimension $2$ in $\PP^{15}$, we expect to find a finite number of feasible tensors in this linear space (up to scale factors). According to \cite[Remark 14]{lin2009polynomial}, the variety of epipolar tensors (that is viewed there as the projective variety for the principal minors of $4\times 4$ matrices) has degree $28$. Hence, this minimal approach should lead to $28$ complex solutions for $\vect F$, and $56$ projective configurations of cameras. 
 Using the computer algebra system {\tt Macaulay2}~\cite{Grayson:aa} we have verified (over finite fields) that imposing $13$ general linear combinations of the $16$ principal minors of the matrix $\vect C$ (so each linear condition can be viewed as a point correspondence), and fixing $c_{12}=c_{13}=c_{14}=1$, we obtain $56$ solutions $\vect C$ in the algebraic closure of the field.

\subsection{Self-calibration} 
We describe a strategy for {\em self-calibration} for two-slit cameras. We assume that we have recovered a projective reconstruction  $\vect A^i_1, \vect A^i_2$ for $i=1,\ldots,n$ for finite two-slit cameras (that we assume were originally ``parallel''). We indicate with $\vect Q$ a  ``euclidean upgrade'', that is, a $4 \times 4$-matrix that describes the transition from a euclidean reference frame to the frame corresponding to our projective reconstruction. According to Proposition \ref{prop:calibration_finite_two_slit_s}, we may write $\vect A^i_1 \vect Q=\vect K_i \vect [\vect R_1^i \, | \, \vect t_1^i]$, $\vect A^i_2 \vect Q= \vect K_2^i \vect [\vect R_2^i \, | \, \vect t_2^i]$, where $\vect R_1^i, \vect R_2^i$ are $2\times 3$ matrices with orthonormal rows (for simplicity, we remove the factor $2$ from the canonical euclidean form). In particular, for all $i=1,\ldots, k$, we have
\begin{equation}\label{eq:autocalibration_s}\small
\begin{aligned}
&\vect A^i_1 \vect Q \vect \Omega^* \vect Q^T {\vect A^i_1}^T = \vect K_1^i {\vect K_1^i}^T\\
&\vect A^i_2 \vect Q \vect \Omega^* \vect Q^T {\vect A^i_2}^T = \vect K_2^i {\vect K_2^i}^T,
\end{aligned}
\end{equation}
where equality is up to scale and $\vect \Omega^*={\rm diag}(1,1,1,0)$. Geometrically, the matrix $\vect \Omega_{Q}^* =\vect Q \vect \Omega^* \vect Q^T$ represents the {\em dual of the absolute conic}, in the projective coordinates used in the reconstruction. The equations \eqref{eq:autocalibration_s} identify in fact the {\em dual of the image of the absolute conic} in the two copies of $\PP^1$. These are the set of planes containing each slit that are tangent to the absolute conic in $\PP^3$.

We now assume that the principal points cameras are at the ``origin'', so that $\vect K_1^i, \vect K_2^i$ (and hence $\vect K_1^i {\vect K_1^i}^T$ and $\vect K_2^i {\vect K_2^i}^T$) are diagonal. Each row in \eqref{eq:autocalibration_s} gives two linear equations in the elements of $\vect \Omega_{Q}^*$, corresponding to the zeros in the matrices on the right hand side. For example, imposing that the $(1,2)$-entry of $\vect K_1^i {\vect K_1^i}^T$ is zero yields
\begin{equation}\small
\begin{aligned}
&a_{11}a_{21} m_{11}+a_{11}a_{22}m_{12}+a_{11}a_{23}m_{13}+a_{11}a_{24}m_{14}\\
&+a_{12}a_{21}m_{21}+a_{12}a_{22}m_{22}+a_{12}a_{23}m_{23}+a_{12}a_{24}m_{24}\\
&+a_{13}a_{21}m_{31}+a_{13}a_{22}m_{32}+a_{13}a_{23}m_{33}+a_{13}a_{24}m_{34}\\
&+a_{14}a_{21}m_{41}+a_{14}a_{22}m_{42}+a_{14}a_{23}m_{43}+a_{14}a_{24}m_{44}=0,
\end{aligned}
\end{equation}
where $\vect \Omega_{Q}^*=(m_{ij})$, and the elements of $\vect A_1^i=(a_{ij})$ are known. A sufficient number of views allows us to estimate $\vect \Omega_{Q}^* $ linearly. Finally, from the singular value decomposition of $\vect \Omega_{Q}^* $, we can compute a matrix $\vect Q'$ such that $\vect Q' \vect \Omega^* \vect Q'^T=\vect \Omega_{Q}^*$. The matrix $\vect Q'$ is however not uniquely determined, and indeed we can actually only recover a {\em similarity} upgrade, since any similarity transformation will fix the absolute conic in $\PP^3$.

% Since $\vect Q \vect \Omega^* \vect Q^T$ is symmetric, it can be parameterized by $10$ homogeneous parameters, so three two-slit cameras are sufficient for recovering the similarity upgrade.

\paragraph{Experiments.} To apply our self-calibration scheme, we consider $10$ cameras $\vect A_1^i,\vect A_2^i$, $i=1,\ldots, 10$, of the form $\vect A^i_1 = \vect K_1^i [\vect R_1^i \, | \, \vect t_1^i] \vect Q^{-1}$, $\vect A^i_2= \vect K_2^i \vect [\vect R_2^i \, | \, \vect t_2^i] \vect Q^{-1}$, where $\vect R_1^i, \vect t_1^i,  \vect R_2^i, \vect t_2^i$ are random parameters for euclidean primitive parallel cameras, $\vect K_1^i, \vect K_2^i$ are random {\em diagonal} calibration matrices, and $\vect Q$ is a random $4\times 4$ matrix describing a projective change of coordinates. We also add small amounts of noise to the entries of $\vect A_1^i, \vect A_2^i$. The matrices $\vect A_1^i,\vect A_2^i$ represent a {\em projective} configuration of two-slit cameras. Using \eqref{eq:autocalibration_s}, we can recover an estimate for $\vect \Omega_{Q}^* = \vect Q \vect \Omega^* \vect Q^T$ by solving an over-constrained linear system (with $40$ equations). From this, we compute a matrix $\vect Q'$ such that $\vect Q' \vect \Omega^* \vect Q'^T \simeq \vect \Omega_{Q}^*$. For our example, the original data was
\begin{equation}\small
\begin{aligned}
&\vect Q=\qmatrix{1.49 & 0.60 & -0.11 & -1.15 \\ -1.43 & 0.88 & -0.93 & 1.52 \\ -0.38 & -0.21 & 1.83 & -0.55 \\  0.83 & -0.95 & -0.63 &  0.93},\\
& \vect Q \vect \Omega^* \vect Q^T= \qmatrix{ 1.& -0.58 & -0.34 &  0.28 \\ -0.58 & 1.42 & -0.52 & -0.55 \\ -0.34 & -0.52 & 1.36 & -0.49 \\ 0.28 & -0.55 & -0.49 &  0.77},
\end{aligned}
\end{equation}
while our estimates are
\begin{equation}\small
\begin{aligned}
&\vect Q'=\qmatrix{-0.43 &  0.21 & 0.35 &  0.  \\ 0.67 &  0.26 &  0.08 &  0. \\ -0.04 & -0.69 &  0.03 & 0. \\ -0.34 &  0.26 & -0.28 & 1.  },\\
& \vect Q' \vect \Omega^* \vect Q'^T= \qmatrix{ 1. &-0.59 & -0.34 &  0.29 \\ -0.59 & 1.44 &-0.51 &-0.56 \\ -0.34 & -0.51 & 1.35 & -0.48 \\  0.29 & -0.56 & -0.48 & 0.75}.
\end{aligned}
\end{equation}
The matrices $\vect Q$, $\vect Q'$ are not close, however one easily verifies that $\vect Q^{-1} \vect Q'$ is (almost) a similarity transformation. In particular, the cameras $\vect A_1^i \vect Q', \vect A_2^i \vect Q'$, $i=1,\ldots, 10$ are a ``similarity upgrade'' of the projective solution. For example, for the first of our $10$ original cameras we had $\vect K_1^1={\rm diag}(4.04,1)$, $\vect K_2={\rm diag}(1.37,1)$, and indeed
\begin{equation}\small
\begin{aligned}
&\vect A_1^1 \vect Q'=\qmatrix{-2.07 & -1.29 & 3.23 & 13.25 \\ 0.39 &  -0.91 & -0.12 & -0.08},\\
&\vect A_2^1 \vect Q'=\qmatrix{-0.49 & -0.36 & 1.24 &  2.81 \\ 0.38 & -0.91& -0.12 &  0.53},
\end{aligned}
\end{equation}
describe a parallel two-slit camera, where the ratios between the norms of the rows (the ``magnifications'') are respectively $4.05$ and $1.38$.

\end{appendices}

\end{document}